\documentclass[12pt,a4paper]{article}

\usepackage[a4paper,margin=25mm]{geometry}
\usepackage{amsmath}
\usepackage{amssymb}
\usepackage{amsthm}
\usepackage{graphicx}
\usepackage{booktabs}
\usepackage{tabularx}
\usepackage{array}
\usepackage{longtable}
\usepackage{placeins}
\usepackage{appendix}
\usepackage[authoryear,round]{natbib}
\usepackage{setspace}
\usepackage[hidelinks]{hyperref}
\usepackage{xcolor}

\graphicspath{{Fig/}}

\theoremstyle{plain}

\newtheorem{proposition}{Proposition}
\theoremstyle{definition}

\theoremstyle{remark}

\begin{document}

\raggedbottom
\singlespacing

\begingroup
\setstretch{1.05}
\begin{flushleft}

{\Large\bfseries
Amortized Bandwidth Learning for Kernel Density Estimation under Logarithmic Score
\par}

\vspace{0.7em}

{\small\bfseries
Junyi Liang\textsuperscript{1}
\qquad
Hailiang Du\textsuperscript{2,1,3,*}
\par}

\vspace{0.45em}

{\footnotesize
\textsuperscript{1}School of Mathematics, East China University of Science and Technology,
200237, Shanghai, China
\par}

{\footnotesize
\textsuperscript{2}Department of Mathematical Sciences; Institute of Hazard, Risk and Resilience,
Durham University, DH1 3LE, Durham, United Kingdom
\par}

{\footnotesize
\textsuperscript{3}Data Science Institute; Global School of Sustainability,
The London School of Economics and Political Science,
WC2A 2AE, London, United Kingdom
\par}

\vspace{0.3em}

{\footnotesize
\textsuperscript{*}Corresponding author:
\href{mailto:hailiang.du@durham.ac.uk}{hailiang.du@durham.ac.uk}
\par}

\end{flushleft}
\endgroup

\vspace{0.6em}

\noindent\textbf{Abstract}

\noindent Kernel density estimation converts finite samples into probability densities,
but its performance depends critically on bandwidth selection. Classical
selectors prescribe the sample-to-bandwidth rule analytically or asymptotically,
or solve a new optimization for each sample. An amortized framework is proposed that instead learns this mapping across a distribution of density-estimation tasks by optimizing the logarithmic score. A truncated-and-renormalized bounded-support formulation enables stable
learning across heterogeneous tasks, while affine standardization allows
a selector trained on a single reference interval to transfer across bounded
intervals.  Experiments under Gaussian sampling, a multi-family benchmark, and
randomized Gaussian-mixture training show that the amortized selector
consistently and substantially outperforms Silverman's rule, the
Sheather--Jones selector, and least-squares cross-validation, with especially
large gains in small and heterogeneous samples. Finite Gaussian mixtures provide
a generic training mechanism supported by their $L^1$ approximation property. Selectors trained in this way generalize strongly across different
density structures, allowing the same trained selector to be applied
directly to finite samples from unknown densities without specifying or
fitting a distributional family. This combination of broad applicability
and strong empirical performance makes the framework attractive for a
wide range of applications in which finite samples or ensembles must be
converted into continuous probability densities.

\par\noindent\textbf{Keywords:} amortized learning; bandwidth selection; Gaussian mixture models; kernel density estimation; logarithmic score

\section{Introduction}

In many applications, one has access only to a finite sample or a finite ensemble from an underlying distribution, while the quantity ultimately needed for inference, evaluation, or decision-making is a continuous probability density. Examples include probabilistic forecasting, simulation-based uncertainty quantification, and other settings in which tail probabilities, predictive likelihoods, or calibrated distributional summaries are of primary interest. Kernel density estimation (KDE) is a classical nonparametric method for converting such finite samples into a density estimate~\citep{Silverman1986,Scott2015}. In kernel density estimation, the central practical difficulty is bandwidth selection, since the bandwidth controls the degree of smoothing and thereby has a decisive effect on estimation quality. The present study focuses on the widely used Gaussian-kernel setting. Although the methodology is developed and evaluated in this setting, the underlying idea of learning a reusable bandwidth rule is not restricted to the Gaussian kernel and can be extended to other kernel families.

A large literature has developed classical bandwidth selectors for kernel
density estimation, including rule-of-thumb methods
\citep{Silverman1986,Scott2015}, plug-in methods
\citep{SheatherJones1991,WandJones1995}, and cross-validation methods
\citep{Rudemo1982,Hall1983,Bowman1984}. Rule-of-thumb selectors, such as Silverman's rule, provide simple closed-form prescriptions based on normal-reference arguments together with global scale summaries of the sample~\citep{Silverman1986}, but this simplicity comes at the cost of relying on a fixed reference structure that may be poorly matched to skewed, heavy-tailed, contaminated, or multimodal densities. Plug-in methods are more flexible in that they typically begin from an
asymptotic approximation to an estimation criterion and replace the
resulting unknown population quantities, such as curvature-related
functionals, by data-based estimates~\citep{SheatherJones1991,WandJones1995}, but they still depend on asymptotic approximations and on the reliable estimation of higher-order quantities, which may be unstable in small to moderate samples. Cross-validation methods rely less on an explicit reference model and instead select the bandwidth by optimizing a sample-based criterion over candidate values~\citep{Rudemo1982,Hall1983,Bowman1984}. These approaches differ in how the bandwidth is computed. Rule-of-thumb and plug-in procedures define reusable sample-to-bandwidth algorithms whose structure is prescribed analytically or asymptotically, whereas cross-validation procedures typically solve a new optimization problem for each observed sample. What they do not generally do is estimate the bandwidth-selection mapping itself from a distribution of related density-estimation tasks. Consequently, they cannot directly exploit systematic cross-task variation to learn which sample characteristics are predictive of good bandwidth choices under a specified evaluation criterion. This motivates a different formulation in which the bandwidth rule is learned across tasks. The proposed approach does not claim that classical selectors are unusable on new samples; rather, it replaces an analytically prescribed or repeatedly optimized selector with a mapping whose parameters are estimated from a collection of density-estimation problems. The resulting mapping remains adaptive to the observed sample, while also incorporating regularities identified across the training task distribution.

Classical bandwidth selection in kernel density estimation has largely been
developed under integrated squared-error criteria, especially MISE and its
asymptotic approximations \citep{Silverman1986,Scott2015,WandJones1995}.
These criteria are mathematically convenient, but KDE ultimately produces a
probability density, so bandwidth quality is naturally viewed as a problem of
probabilistic assessment. Strict propriety alone does not determine how imperfect
density estimates should be compared, since different strictly proper scoring rules
can rank them differently. Locality provides a further principle: for continuous
densities, the logarithmic score is the only proper local scoring rule, depending
directly on the density assigned to the realized outcome
\citep{Bernardo1979,BrockerSmith2007}. By contrast, nonlocal scores can
reward features of the reported density away from the realized outcome and may
produce unfortunate evaluations. The logarithmic score also has a direct
interpretation in terms of probabilities and bits of information, while relative
logarithmic score comparisons are invariant under smooth one-to-one
transformations of the variable \citep{Du2021}. For these reasons, the proposed
framework learns the bandwidth rule directly under the logarithmic score.

These considerations suggest a different formulation of the KDE bandwidth problem. Rather than fixing the functional form of the selector through an analytical or asymptotic prescription, or solving a new bandwidth-optimization problem for every observed sample, the present work learns the sample-to-bandwidth mapping from a distribution of density-estimation tasks. The core idea is to learn a shared sample-to-bandwidth mapping that extracts structural information from each sample and outputs a bandwidth adapted to that task. The resulting mapping, referred to as the amortized bandwidth selector, is reusable in the same operational sense as a classical selector, but its functional behaviour is estimated from cross-task data rather than prescribed in advance. Since the estimated object is a probability density, the amortized selector is trained and evaluated under the logarithmic score rather than an integrated squared-error criterion. To prevent the bandwidth from being driven to excessively large values by rare realizations in remote low-density regions under logarithmic score evaluation, the framework adopts a bounded-support formulation based on truncation and renormalization. This restricts both the target density and the KDE to a common interval, so that the learning objective is concentrated on the region of practical interest rather than being influenced by behaviour outside it. In this way, bandwidth selection is no longer treated as a fixed analytic prescription, but as a reusable rule learned directly from task variation.

The resulting framework is evaluated from three complementary perspectives. A Gaussian benchmark is first used to test whether the amortized selector captures the basic finite-sample and scaling behaviour of KDE bandwidths in the most transparent setting. A bounded multi-family benchmark is then introduced to
assess whether a single amortized selector remains effective under substantial structural variation across density families. Finally, a Gaussian mixture model (GMM) benchmark examines whether a flexible Gaussian-mixture generator can replace the designed multi-family distribution as a source of tasks for learning the amortized selector. Together, these experiments test whether the amortized selector improves empirical performance relative to classical selectors and remains effective across diverse density-estimation tasks.

The remainder of the paper is organized as follows. Section~2 introduces the proposed bandwidth-learning framework, including the amortized sample-to-bandwidth mapping, the logarithmic score objective, the bounded-support formulation, and the interval-standardization rule for cross-interval transfer. Section~3 presents the experimental results. Section~4 concludes with a summary of the main findings and a discussion of limitations and future directions.

\section{Bandwidth-Learning Framework}

The proposed framework retains KDE as the final density estimator and learns only the bandwidth-selection rule. Each observed sample is represented by a permutation-invariant set of features. A neural network maps these features to a bandwidth, which is then used to construct the KDE. Unlike classical bandwidth selectors, the sample-to-bandwidth mapping is learned jointly across density-estimation tasks and then reused for new samples without task-specific reoptimization. 

The framework is developed in three parts. Bandwidth selection is first formulated as an amortized learning problem, in which a shared predictor maps sample features to bandwidths across related tasks. The training criterion is then specified as the logarithmic score, aligning the selector with the
probabilistic objective used to assess density estimation. Finally, a truncated-and-renormalized bounded-support formulation is introduced together with affine standardization, which maps bounded task domains to a common reference interval and enables cross-interval transfer through bandwidth rescaling.

\subsection{Amortized Bandwidth Prediction}

This subsection formalizes the object that is learned. Let $f$ denote an unknown density on $\mathbb{R}$, and let
\begin{equation}
S=(x_1,\dots,x_n)\sim f^n
\end{equation}
be an observed i.i.d.\ sample of size $n$. For a bandwidth $h>0$, the associated Gaussian-kernel density estimator is
\begin{equation}
q_h(x;S)=\frac{1}{nh}\sum_{i=1}^n
\phi\!\left(\frac{x-x_i}{h}\right),
\qquad
\phi(u)=(2\pi)^{-1/2}e^{-u^2/2}.
\end{equation}
The amortized selector treats the sample-to-bandwidth relationship as an unknown
mapping to be learned across density-estimation tasks. Once trained, it can be
applied to new samples without task-specific reoptimization. The bandwidth rule
is written as
\begin{equation}
h_\theta(S)=F_\theta(\mathrm{feat}(S)),
\end{equation}
where $\mathrm{feat}(S)$ is a finite-dimensional vector of permutation-invariant sample features, $F_\theta$ is a parametric map with positive output, and $\theta$
denotes its trainable parameters. In the present proof-of-concept implementation, $F_\theta$ is realized by a simple multilayer perceptron (MLP), and the feature map consists of the sample
size, sample mean, sample standard deviation, sample skewness, and sample kurtosis:
\begin{equation}
\mathrm{feat}(S)=\bigl(n,\mathrm{mean}(S),\mathrm{sd}(S),
\mathrm{skew}(S),\mathrm{kurt}(S)\bigr).
\end{equation}
These features capture basic information on sample size, location, scale, asymmetry, and tail behaviour while remaining low-dimensional and interpretable. For the Gaussian diagnostic benchmark below, a simplified special case is used in which the network learns only a dimensionless bandwidth ratio as a function
of sample size.

\subsection{Logarithmic Score Criterion}

The training criterion is taken from the framework of proper scoring rules.
A scoring rule assigns a numerical loss to a reported probability density and an
observed outcome, with lower values indicating better probabilistic performance.
Strict propriety ensures that reporting the data-generating distribution is optimal
in expectation
\citep{GneitingEtAl2007,GneitingRaftery2007,BrockerSmith2007}.
This property is essential, but it does not uniquely determine how imperfect
density estimates should be compared: different strictly proper scoring rules can
rank them differently \citep{Du2021}.

A further distinction is provided by locality. A local scoring rule depends only
on the density assigned to the realized outcome. For continuous densities, the
logarithmic score is, up to affine equivalence, the only proper local scoring rule
\citep{Bernardo1979,BrockerSmith2007,JewsonRossell2022}. This property
is particularly relevant to bandwidth selection, since nonlocal scores can reward
features of the estimated density away from the realized outcome and may therefore
produce ``unfortunate'' evaluations
\citep{Du2021}. They may also change their preferences under smooth
transformations of the variable. By contrast, the logarithmic score directly
evaluates the density assigned to the realized value and has a direct interpretation
in terms of probabilities and bits of information
\citep{Good1952,RoulstonSmith2002}. Its sample average coincides with
the negative log-likelihood per observation, while its expectation is the
cross-entropy.

These considerations motivate the use of the logarithmic score as the training and evaluation criterion for bandwidth selection. For an independent realization $X\sim f$, the task-specific logarithmic score criterion is
\begin{equation}
\mathcal{L}(h;S,f)
=
\mathbb{E}_{X\sim f}
\left[-\log_2 q_h(X;S)\right]
=
-\int
\log_2 q_h(x;S)f(x)\,dx .
\end{equation}
To formalize learning across density-estimation tasks, let $\Pi$ denote the task distribution over $(f,n)$. The expected objective of the amortized selector over the task distribution is 
\begin{equation}
\mathcal{J}_{\Pi}(\theta)
=
\mathbb{E}_{(f,n)\sim\Pi}
\mathbb{E}_{S\sim f^n}
\left[
\mathcal{L}\bigl(h_\theta(S);S,f\bigr)
\right].
\end{equation}
In practice, the expectation in $\mathcal{J}_{\Pi}(\theta)$ is approximated using a finite sample of training tasks:
\begin{equation}
\widehat{\mathcal{J}}_{N}(\theta)
=
\frac{1}{N}
\sum_{j=1}^{N}
\mathcal{L}\bigl(h_\theta(S_j);S_j,f_j\bigr),
\qquad
(f_j,n_j)\sim\Pi,
\quad
S_j\sim f_j^{\,n_j}.
\end{equation}

\subsection{Bounded-Support Formulation and Cross-Interval Transfer}\label{sec:bounded_support}

A common bounded interval serves several purposes in the present learning framework. First, under logarithmic score evaluation, rare realizations in remote low-density regions can exert a disproportionate influence on the training objective and drive the learned bandwidth towards excessive smoothing. Restricting
the criterion to a region of practical interest limits this effect and focuses learning on the part of the distribution that is relevant for evaluation. Second, affine standardization maps bounded task domains to a common reference interval. This places sample features on comparable scales across tasks, so the predictor need not learn separately how to accommodate arbitrary
shifts and changes of scale. The bounded formulation also aligns naturally with applications in which the variable is physically constrained or only relevant over a finite range. For such variables, a Gaussian-kernel KDE may assign probability mass outside the admissible region; truncation and renormalization instead define the estimated density on the chosen interval.

For any density $f$ with positive mass on $[A,B]$, let $f_{[A,B]}(x)=f(x)\mathbf{1}_{[A,B]}(x)/\int_A^B f(u)\,du$ denote its truncated-and-renormalized version on that interval. The Gaussian KDE is truncated and renormalized analogously:
\begin{equation}
q_{[A,B],h}(x;S)
=
\frac{q_h(x;S)\mathbf{1}_{[A,B]}(x)}
{\int_A^B q_h(u;S)\,du}.
\end{equation}
Thus both the target density and the KDE are rewritten as probability densities on the same bounded interval. The corresponding bounded logarithmic score criterion is
\begin{equation}
\mathcal{L}_{[A,B]}(h;S,f)
=
\mathbb{E}_{X\sim f_{[A,B]}}
\left[
-\log_2 q_{[A,B],h}(X;S)
\right].
\end{equation}
In applications, the interval $[A,B]$ is treated as a working domain rather than as an
estimate of the full mathematical support. When physical or operational
bounds are available, they provide a natural choice. Otherwise, $[A,B]$
may be specified from representative historical or reference data so as to
cover the range over which the density estimate is intended to be used, with
suitable margins to avoid truncating non-negligible probability mass. A data-driven alternative is to expand the observed sample range using the distribution-free rank identity that, for an i.i.d.\ continuous sample of size $n$, a new observation falls outside the current range with probability $2/(n+1)$. A reference model, such as the uniform distribution,
can then be used to translate this probability into an interval expansion.

To transfer the predictor between bounded intervals, a sample on $[A,B]$
is first mapped to the reference interval $[-1,1]$ by

\begin{equation}
T(x)=\frac{x-m}{a},
\qquad
m=\frac{A+B}{2},
\qquad
a=\frac{B-A}{2}.
\end{equation}

The predictor trained on $[-1,1]$ is then applied to $T(S)$, and the
resulting bandwidth is rescaled to the original interval:
\begin{equation}
h_{\theta,[A,B]}(S)
=
aF_\theta\!\left(\mathrm{feat}(T(S))\right).
\end{equation}
Proposition~1 shows that this bandwidth rescaling is exactly compatible with the bounded logarithmic score criterion.

\begin{proposition}\label{prop:affine-equivariance}
Let $S\in[A,B]^n$, let $T(S)\in[-1,1]^n$ be its standardized sample,
and let $\widetilde{f}$ denote the density of $Y=(X-m)/a$ for
$X\sim f_{[A,B]}$. If
$h_{[A,B]}=a\,h_{[-1,1]}$, then
\begin{equation}
\mathcal{L}_{[A,B]}
\bigl(h_{[A,B]};S,f\bigr)
=
\mathcal{L}_{[-1,1]}
\bigl(h_{[-1,1]};T(S),\widetilde{f}\bigr)
+\log_2 a .
\end{equation}
Consequently, the task-specific minimizing bandwidths on the two intervals
correspond under the scaling
$h_{[A,B]}=a\,h_{[-1,1]}$.
\end{proposition}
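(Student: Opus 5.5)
The plan is a direct change of variables. There are three ingredients: the scaling identity for the Gaussian kernel under the affine map $T$, the invariance of the renormalized bounded KDE, and a change of variables in the expectation defining $\mathcal{L}_{[A,B]}$.

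\textbf{Step 1: pointwise KDE identity.} Write $y=T(x)=(x-m)/a$, $y_i=T(x_i)$ and $h=a\,h_{[-1,1]}$. Then $(x-x_i)/h=(y-y_i)/h_{[-1,1]}$, so
\begin{equation}
q_{h}(x;S)=\frac{1}{nh}\sum_{i=1}^n\phi\!\left(\frac{y-y_i}{h_{[-1,1]}}\right)=\frac{1}{a}\,q_{h_{[-1,1]}}\bigl(T(x);T(S)\bigr).
\end{equation}

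\textbf{Step 2: normalizing constants.} Substituting $u=m+av$ gives
\begin{equation}
\int_A^B q_h(u;S)\,du=\int_{-1}^{1} q_{h_{[-1,1]}}(v;T(S))\,dv .
\end{equation}
The factor $1/a$ from Step 1 cancels against the Jacobian $a$. Dividing, and using that $x\in[A,B]$ exactly when $T(x)\in[-1,1]$, yields
\begin{equation}
q_{[A,B],h}(x;S)=\frac{1}{a}\,q_{[-1,1],h_{[-1,1]}}\bigl(T(x);T(S)\bigr).
\end{equation}

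\textbf{Step 3: the expectation.} $\widetilde f$ is the law of $Y=T(X)$ with $X\sim f_{[A,B]}$, and it is supported on $[-1,1]$. Hence
\begin{equation}
\mathbb{E}_{X\sim f_{[A,B]}}\bigl[-\log_2 q_{[A,B],h}(X;S)\bigr]=\mathbb{E}_{Y\sim\widetilde f}\bigl[-\log_2 q_{[-1,1],h_{[-1,1]}}(Y;T(S))\bigr]+\log_2 a .
\end{equation}
One also has to check that $\widetilde f$ is already a density on $[-1,1]$, so that its truncated-and-renormalized version $\widetilde f_{[-1,1]}$ equals $\widetilde f$. This confirms that the right-hand side is exactly $\mathcal{L}_{[-1,1]}(h_{[-1,1]};T(S),\widetilde f)$.

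\textbf{Consequence for minimizers.} The map $h\mapsto h/a$ is a bijection of $(0,\infty)$, and the two criteria differ by the constant $\log_2 a$, which does not depend on $h$. Therefore $h^\ast$ minimizes the left-hand criterion if and only if $h^\ast/a$ minimizes the right-hand one; this holds for the sets of minimizers and does not require uniqueness.

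\textbf{Main obstacle.} The only delicate point is well-definedness: the expectations must be finite, or both infinite together, so that the identity is meaningful. Since $q_h$ is continuous and strictly positive on the compact interval, $-\log_2 q_{[A,B],h}$ is bounded there. The expectations are therefore finite and no integrability issue arises, so the argument is essentially bookkeeping of the Jacobian factor.
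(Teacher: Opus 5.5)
Your proposal is correct and follows the same route as the paper's proof. Both use the change of variables $x=m+ay$, the pointwise identity $q_{[A,B],a h}(x;S)=a^{-1}q_{[-1,1],h}(T(x);T(S))$ with the truncation normalizer left invariant, and the resulting additive $\log_2 a$, which does not depend on the bandwidth. Your extra remarks are all sound and are details the paper leaves implicit: that $\widetilde f$ is already supported on $[-1,1]$, so $\widetilde f_{[-1,1]}=\widetilde f$; that the scores are finite because the Gaussian KDE is bounded away from zero on a compact interval; and that minimizer sets correspond without any uniqueness assumption.
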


\begin{proof}
Under the change of variables $x=m+ay$, the target density, the KDE,
and its truncation normalizer transform as
\[
\begin{aligned}
f_{[A,B]}(x)
&=a^{-1}\widetilde{f}(y),\\
q_{[A,B],h_{[A,B]}}(x;S)
&=a^{-1}q_{[-1,1],h_{[-1,1]}}(y;T(S)),\\
Z_{[A,B],h_{[A,B]}}(S)
&=Z_{[-1,1],h_{[-1,1]}}(T(S)),
\end{aligned}
\]
where
$Z_{[A,B],h}(S)=\int_A^B q_h(u;S)\,du$
denotes the truncation normalizing constant. Substituting these into
$\mathcal{L}_{[A,B]}$, the integration Jacobian cancels the scale factor
of $f_{[A,B]}$, while the factor $a^{-1}$ inside the logarithm contributes
the additive constant $\log_2 a$. Since $\log_2 a$ is independent of the
bandwidth, the minimizing bandwidths rescale by the factor $a$.
\end{proof}

\section{Experimental Demonstration}

This section evaluates the proposed bandwidth-learning framework from three complementary perspectives. The first benchmark considers Gaussian sampling and examines finite-sample bandwidth behaviour in the simplest setting. The second considers a bounded multi-family task distribution and tests whether a single shared predictor remains effective across different distributional shapes. The third uses randomized finite Gaussian mixtures as the training task distribution and examines whether a single flexible density generator can provide a broad source of tasks for learning a reusable bandwidth rule.

Implementation choices are kept simple throughout. The aim is to evaluate the bandwidth-learning formulation itself rather than to optimize network architecture or numerical implementation. Accordingly, the predictors use low-dimensional sample features, with positivity of the bandwidth enforced through a softplus
output transformation. All logarithms use base 2, so logarithmic score differences have the usual interpretation in bits of information. A one-bit lower score corresponds to assigning, on average, twice as much probability density to the realized outcomes \citep{Du2021}.

\subsection{Comparison of Bandwidth Selectors under Gaussian Sampling}

This experiment compares the amortized bandwidth selector with three classical selectors under Gaussian sampling. This controlled setting removes variation in skewness, tail behaviour, and modality, allowing the comparison to isolate the effect of sample size on logarithmic score performance. Performance under non-Gaussian density shapes is examined in the subsequent experiments.

For this Gaussian diagnostic benchmark, the amortized selector is restricted to
the scale-equivariant form
\begin{equation}
    h_\theta(S)
    =
    \operatorname{sd}(S)F_\theta(n),
    \qquad
    F_\theta(n)>0.
\end{equation}
Thus, the network learns only a dimensionless bandwidth ratio as a function of sample size, while the underlying distributional shape is fixed. The amortized selector is compared with Silverman's rule, the Sheather--Jones selector, and least-squares cross-validation. Their definitions and implementation details are provided in Appendix~\ref{app:classical_selectors}. All four selectors are translation invariant and scale equivariant, up to negligible numerical implementation error. The corresponding derivations are given in Appendix~\ref{app:scale_equivariance}. It is therefore sufficient to conduct the comparison under standard-normal sampling, since translating or rescaling a Gaussian distribution does not alter the ordering of the selectors or their
pairwise logarithmic score differences.

Training samples are generated from $\mathcal{N}(0,1)$, with
$n\sim\operatorname{Unif}\{5,\ldots,256\}$. Both the target density and the KDE are defined on the full real line; no truncation or renormalization is applied in this
experiment. The predictor is trained on online-generated Gaussian tasks, with model selection based on a fixed validation set. Details of the network architecture and training procedure are provided in Appendix~\ref{app:network_training}. For each displayed sample size, performance is evaluated on
$N_{\mathrm{test}}=30000$ independent test samples.

\begin{figure*}[t]
\centering
\includegraphics[width=0.95\textwidth]
{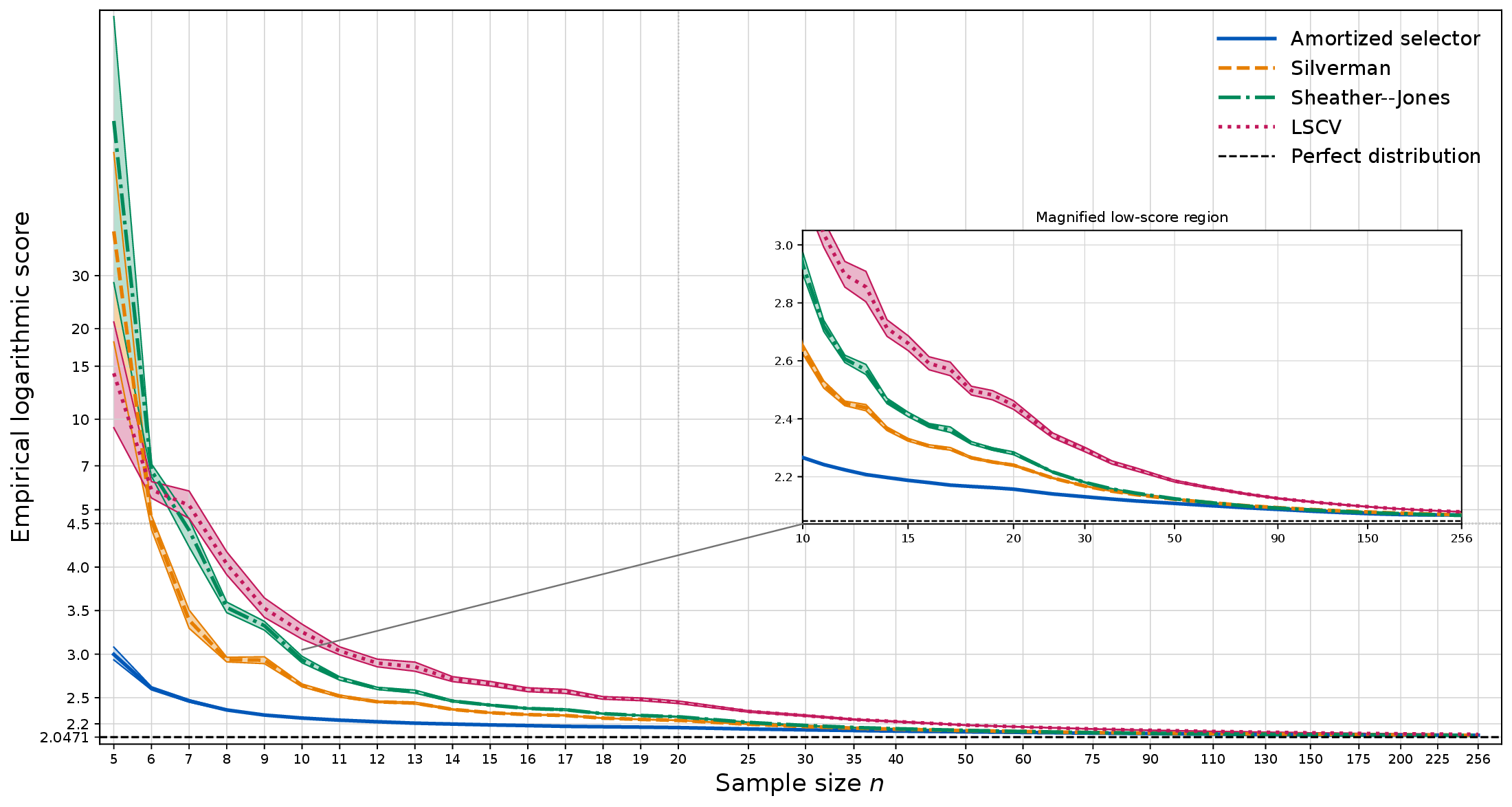}
\caption{Empirical logarithmic score versus sample size under Gaussian sampling. Scores are averaged over $N_{\mathrm{test}}=30000$ independent samples at each $n$; shaded bands show 90\% bootstrap intervals, and the black dashed line denotes the score of the true $\mathcal{N}(0,1)$ density. The horizontal axis is linear for $n\leq20$ and logarithmically compressed thereafter, while the vertical axis is linear up to $4.5$ bits and logarithmically compressed above.}
\label{fig:gaussian_comparison_empirical_score}

\end{figure*}

Figure~\ref{fig:gaussian_comparison_empirical_score} shows that the amortized selector attains the lowest empirical logarithmic score throughout the displayed sample-size range and remains consistently closer to the standard-normal reference than the three classical selectors. The poor performance of the classical selectors in very small samples is consistent with occasional severe undersmoothing. Accidental clustering of observations can lead to an extremely small selected bandwidth, causing the resulting KDE to assign very low density to outcomes that
remain plausible under the standard-normal distribution and hence to incur large logarithmic penalties. The amortized selector is less sensitive to accidental local spacing
irregularities because its bandwidth ratio is learned across training tasks rather than determined by sample-specific spacing patterns. Training under the logarithmic score also discourages bandwidths that assign negligible density to outcomes that remain plausible under the data-generating distribution. This strong penalization is not an artefact of the logarithmic score, but an appropriate response to severe density-estimation errors \citep{Du2021}. As the sample size increases, all four selectors move towards the true standard-normal reference and their differences gradually narrow, although the amortized selector
retains a clear advantage through at least $n=50$.

\subsection{Comparison of Bandwidth Selectors under Bounded
Multi-Family Sampling}
\label{sec:multifamily}

The Gaussian comparison isolates finite-sample bandwidth behaviour under
a single regular density family. The present experiment evaluates
whether a shared amortized selector remains effective across a much
broader collection of density shapes, including skewed, heavy-tailed,
boundary-concentrated, multimodal, and multi-scale distributions.

The benchmark comprises ten equally weighted distribution families: Gaussian,
Laplace, Student-$t$, Gamma, Beta, Logistic, Lognormal, Bimodal, Trimodal,
and Spike--slab. Family-specific parameter ranges are chosen so that at least
$90\%$ of the untruncated probability mass lies in $[-1,1]$, providing a
practical balance between concentrating most probability mass within the working
interval and retaining some tail mass outside it. Similar results were obtained
with alternative thresholds. Further details are given in
Table~\ref{tab:multifamily_generators}. The bounded-support formulation of Section~2.3 is then applied.

\begingroup

\fontsize{10.5pt}{12pt}\selectfont

\setlength{\tabcolsep}{0pt}
\renewcommand{\arraystretch}{0.98}
\setlength{\jot}{0pt}

\setlength{\LTleft}{0pt}
\setlength{\LTright}{0pt}
\setlength{\LTpre}{0pt}
\setlength{\LTpost}{0pt}

\makeatletter
\setlength{\LTcapwidth}{\textwidth}
\def\LT@makecaption#1#2#3{%
  \LT@mcol\LT@cols l{%
    \parbox[t]{\LTcapwidth}{%
      \reset@font
      \fontsize{9.5pt}{11.5pt}\sffamily\selectfont
      #1{\textbf{#2.}\space}#3\strut\par
    }%
  }%
}
\makeatother

\begin{longtable}{
@{}
>{\raggedright\arraybackslash}p{0.12\textwidth}
@{\hspace{7pt}}
>{\raggedright\arraybackslash}p{0.34\textwidth}
@{\hspace{7pt}}
>{\raggedright\arraybackslash}p{\dimexpr0.54\textwidth-14pt\relax}
@{}
}

\caption[Bounded multi-family benchmark]
{Densities and parameter specifications for the bounded multi-family benchmark.}
\label{tab:multifamily_generators}
\\

\toprule
Family
&
Density \(g(x)\)
&
Parameters
\\
\midrule
\endfirsthead

\multicolumn{3}{@{}l}{
\fontsize{9pt}{11pt}\selectfont
\sffamily\itshape
Table~\thetable\ continued
}
\\[2pt]

\toprule
Family
&
Density \(g(x)\)
&
Parameters
\\
\midrule
\endhead

\midrule
\multicolumn{3}{r@{}}{
\fontsize{9pt}{11pt}\selectfont
\sffamily\itshape
Continued on next page
}
\\
\endfoot

\bottomrule

\multicolumn{3}{@{}p{\textwidth}@{}}{%
\vspace{2pt}
\fontsize{10pt}{11.5pt}\selectfont
\textit{Note:}
Each family contributes \(10{,}000\) independently generated settings
(\(100{,}000\) in total).
For each setting, independent
\(\mu\sim U[-0.5,0.5]\) and \(c\sim U[0.9,1]\) are drawn.
Given the family-specific shape parameters, the overall scale
(\(\lambda\) for mixtures) is chosen numerically to satisfy
\(\int_{-1}^{1}g(u)\,du=c\).
Intervals without an explicit sampling distribution are realised ranges
over the \(10{,}000\) settings for that family.
Sampling and evaluation use
\(f_{[-1,1]}(x)=
g(x)\mathbf{1}_{[-1,1]}(x)/\int_{-1}^{1}g(u)\,du\).
Here, \(\phi_{\sigma}\), \(t_{\nu}\), \(\gamma_{k,\theta}\),
\(\beta_{a,b}\), and \(\ell_{\tau}\) denote the Gaussian,
standard Student-\(t\), Gamma, Beta, and lognormal densities;
\(U[a,b]\equiv\operatorname{Unif}[a,b]\) and
\(LU[a,b]\equiv\operatorname{LogUnif}[a,b]\).
Displayed realised bounds are rounded to one or two decimal places;
all computations use full-precision values.
\vspace{2pt}
}
\\
\endlastfoot

Gaussian
&
\(g(x)=\phi_{\sigma}(x-\mu)\)
&
\(\sigma\in[0.13,0.61]\).
\\[1pt]

Laplace
&
\(g(x)=(2b)^{-1}\exp\{-|x-\mu|/b\}\)
&
\(b\in[0.06,0.43]\).
\\[1pt]

Student-\(t\)
&
\(g(x)=s^{-1}t_{\nu}\{(x-\mu)/s\}\)
&
\(\nu\sim LU[3,30]\);
\(s\in[0.02,0.59]\).
\\[1pt]

Gamma
&
\(
\begin{array}{@{}l@{}}
y=k\theta+\varepsilon(x-\mu),\\[-2pt]
g(x)=\gamma_{k,\theta}(y)
\mathbf{1}_{\{y>0\}} .
\end{array}
\)
&
\(k\sim LU[0.5,10]\);
\(\varepsilon\in\{-1,1\}\) equiprobably;
\(\theta\in[0.04,1.4]\).
\\[1pt]

Beta
&
\(
\begin{array}{@{}l@{}}
z=\dfrac{a}{a+b}+\dfrac{x-\mu}{s},\\[-2pt]
g(x)=s^{-1}\beta_{a,b}(z)
\mathbf{1}_{\{0<z<1\}} .
\end{array}
\)
&
Four equiprobable regimes:
\(a,b\stackrel{\mathrm{iid}}{\sim}U[0.4,0.9]\),
\(U[1.2,5]\), or \(U[0.8,1.2]\);
or \(a\) and \(b\) are drawn, in random order, from
\(U[0.4,0.9]\) and \(U[1.2,5]\).
\(s\in[0.7,8.3]\).
\\[1pt]

Logistic
&
\(
g(x)=
\frac{\exp\{-(x-\mu)/s\}}
{s[1+\exp\{-(x-\mu)/s\}]^2}
\)
&
\(s\in[0.06,0.34]\).
\\[1pt]

Lognormal
&
\(
\begin{array}{@{}l@{}}
y=e^{\tau^2/2}+\varepsilon(x-\mu)/s,\\[-2pt]
g(x)=s^{-1}\ell_{\tau}(y)
\mathbf{1}_{\{y>0\}} .
\end{array}
\)
&
\(\tau\sim U[0.25,1]\);
\(\varepsilon\in\{-1,1\}\) equiprobably;
\(s\in[0.01,2.4]\).
\\[2pt]

Bimodal
&
\(
g(x)=
w\phi_{\sigma_1}(x-\mu_1)
+
(1-w)\phi_{\sigma_2}(x-\mu_2)
\)
&
\(w\sim U[0.2,0.8]\).
\\[-1pt]

&
\(
\begin{array}{@{}l@{}}
(m_1,m_2)=(-1,1),\\[-1pt]
\bar m=wm_1+(1-w)m_2
\end{array}
\)
&
\(\rho_1,\rho_2
\stackrel{\mathrm{iid}}{\sim}
LU[0.15,0.45]\).
\\[-1pt]

&
\(
\begin{array}{@{}l@{}}
\mu_j=\mu+\lambda(m_j-\bar m),\\[-1pt]
\sigma_j=\lambda\rho_j
\end{array}
\)
&
\(\mu_1\in[-1.0,0.4]\),
\(\mu_2\in[-0.4,1.0]\);
\(\sigma_1,\sigma_2\in[0.04,0.34]\).
\\[3pt]

Trimodal
&
\(
g(x)=
\sum_{j=1}^{3}
w_j\phi_{\sigma_j}(x-\mu_j)
\)
&
\(
(w_1,w_2,w_3)
\sim\operatorname{Dirichlet}(2,2,2)
\),
\(\min_j w_j\geq0.1\).
\\[-1pt]

&
\(
(m_1,m_2,m_3)=(-d_L,0,d_R)
\)
&
\(d_L,d_R
\stackrel{\mathrm{iid}}{\sim}
U[0.8,1.2]\).
\\[-1pt]

&
\(
\bar m=\sum_{j=1}^{3}w_jm_j
\)
&
\(\rho_j
\stackrel{\mathrm{iid}}{\sim}
LU[0.1,0.3]\).
\\[-1pt]

&
\(
\begin{array}{@{}l@{}}
\mu_j=\mu+\lambda(m_j-\bar m),\\[-1pt]
\sigma_j=\lambda\rho_j
\end{array}
\)
&
\(\mu_1\in[-1.3,0.4]\),
\(\mu_2\in[-0.7,0.7]\),
\(\mu_3\in[-0.4,1.4]\);
\(\sigma_1\in[0.03,0.29]\),
\(\sigma_2\in[0.03,0.31]\),
\(\sigma_3\in[0.03,0.28]\).
\\[3pt]

Spike--slab
&
\(
\begin{array}{@{}l@{}}
g(x)=
w\phi_{\sigma_{\mathrm{spike}}}(x-\mu)\\[-2pt]
\qquad+
(1-w)\phi_{\sigma_{\mathrm{slab}}}(x-\mu)
\end{array}
\)
&
\(w\sim U[0.2,0.8]\).
\\[-1pt]

&
\(
(\sigma_{\mathrm{spike}},
 \sigma_{\mathrm{slab}})
=
\lambda(\rho,1)
\)
&
\(\rho\sim LU[0.05,0.20]\);
\(\sigma_{\mathrm{spike}}\in[0.01,0.26]\);
\(\sigma_{\mathrm{slab}}\in[0.12,1.43]\).
\\

\end{longtable}

\endgroup

For each family, 10,000 parameter settings are generated in advance and sampled during training, with
$n\sim\operatorname{Unif}\{5,\ldots,256\}$. The amortized selector uses the feature representation in~(4); further implementation details are provided in Appendix~\ref{app:network_training}. All comparisons are conducted on $[-1,1]$, since Proposition~1 together
with the scale equivariance of the selectors implies that the results transfer directly to any affine image of this interval. At each displayed sample size, performance is averaged equally across the ten families using 3000 independently generated test tasks per family.

\begin{figure*}[t]
\centering
\includegraphics[width=0.95\textwidth]
{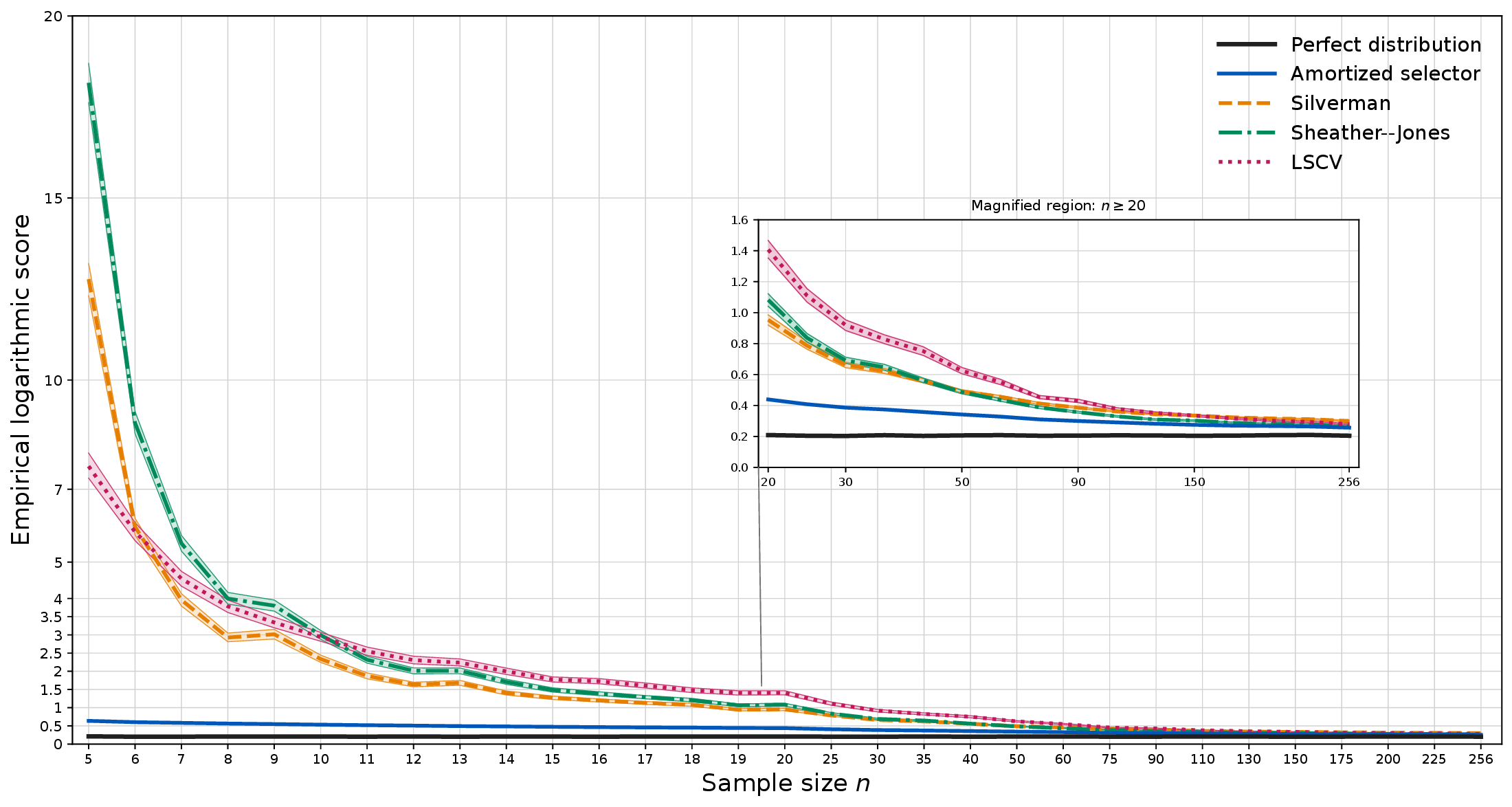}
\caption{Empirical logarithmic score versus sample size under bounded multi-family sampling on $[-1,1]$. Scores are averaged equally across the ten families using $N_{\mathrm{test}}=3000$ independent tasks per family at each sample size. Shaded bands show 90\% bootstrap intervals, and the black line denotes the corresponding score of the true target densities.}

\label{fig:multifamily_comparison_empirical_score}

\end{figure*}

Figure~\ref{fig:multifamily_comparison_empirical_score} shows that the amortized selector attains the lowest empirical logarithmic score throughout the displayed sample-size range and remains consistently closer to the true-density reference than the three classical selectors. Its advantage is more pronounced than under Gaussian sampling, indicating that the improvement extends to heterogeneous density shapes. The stronger separation is consistent with the fact that the classical selectors are neither adapted to this task distribution nor optimized
directly under the logarithmic score.

\subsection{Comparison of Bandwidth Selectors under Bounded GMM Sampling}
\label{sec:gmm}

The multi-family benchmark shows that a single amortized
selector can remain effective across a broad range of density families.
Its construction, however, requires an explicit choice of which
distribution families to include. This motivates examining whether a
single flexible density generator can provide a similarly broad
training distribution.

Gaussian mixture models provide a natural candidate for this purpose.
A finite Gaussian mixture has the form
\begin{equation}
    g_K(x)
    =
    \sum_{k=1}^{K}
    w_k\phi_{\sigma_k}(x-\mu_k),
    \qquad
    w_k\geq0,
    \qquad
    \sum_{k=1}^{K}w_k=1.
\end{equation}
\citet{NguyenEtAl2023} show that finite location--scale mixtures
generated from an essentially bounded base density can approximate
target densities arbitrarily well in $L^p$. Since the Gaussian density
satisfies this condition, finite Gaussian mixtures can approximate
arbitrary probability densities in $L^1$ as the number and parameters
of the components vary. This provides a theoretical motivation for
using GMMs as a flexible source of density-estimation tasks. This
approximation property is preserved under truncation and
renormalization on $[-1,1]$.

Randomized finite Gaussian mixtures are generated for three component
counts, $K\in\{8,16,32\}$. For each fixed $K$, the mixture weights and
component locations are generated according to
\begin{equation}
\begin{aligned}
    \alpha
    &\sim
    \operatorname{LogUnif}[0.5,4],
    \\
    (w_1,\ldots,w_K)
    &\sim
    \operatorname{Dirichlet}
    \left(
        \frac{\alpha}{K},
        \ldots,
        \frac{\alpha}{K}
    \right),
    \\
    \mu_k
    &\stackrel{\mathrm{iid}}{\sim}
    \operatorname{Unif}[-1,1].
\end{aligned}
\end{equation}
The use of $\alpha/K$ keeps the total Dirichlet concentration equal to
$\alpha$ for all $K$, so that changing $K$ does not simultaneously alter
the overall concentration of the mixture weights. This scaling also has
a theoretical motivation: finite mixtures with symmetric Dirichlet
weights of this form approach Dirichlet-process mixtures with
concentration parameter $\alpha$ as $K$ increases
\citep{GreenRichardson2001}. Thus, $K$ controls the number of available
mixture components, whereas $\alpha$ controls the degree of weight
imbalance across tasks. The range $\alpha\in[0.5,4]$, sampled on the
logarithmic scale, is chosen to generate substantial variation in
mixture-weight concentration rather than to represent theoretically
optimal values. The component locations are sampled independently over
the reference interval without imposing separation or regular-spacing
constraints, allowing components to overlap naturally so that the
nominal component count $K$ does not directly determine the number of
modes.

Component scales are generated in two stages. A task-level reference
scale and a within-task log-scale dispersion are first sampled as
\begin{equation}
    \sigma_0
    \sim
    \operatorname{LogUnif}[0.025,0.22],
    \qquad
    \tau
    \sim
    \operatorname{Unif}[0.10,1.00].
\end{equation}
Conditional on $(\sigma_0,\tau)$, the component log-scales are sampled
according to
\begin{equation}
    \log\sigma_k
    \stackrel{\mathrm{iid}}{\sim}
    \mathcal{N}
    \left(
        \log\sigma_0,\tau^2
    \right),
\end{equation}
with sampling restricted to $\sigma_k\in[0.015,0.45]$. The reference
scale $\sigma_0$ determines the typical component width within a task,
whereas $\tau$ controls the heterogeneity of component widths.
Together, they allow both overall and local scales to vary across
tasks. The bounds on $\sigma_k$ exclude extremely narrow or diffuse
components relative to the reference interval $[-1,1]$. These ranges
are chosen to provide broad but nondegenerate variation. The
bounded-support formulation of Section~2.3 is then applied to each
generated mixture on $[-1,1]$.

The GMM construction used here is not tuned to the test distributions
and is not intended to be optimal. Empirically, similar performance was
obtained under reasonable variations of the GMM construction. Other common GMM constructions are designed primarily for mixture inference rather than for generating a heterogeneous collection of density-estimation tasks. Bayesian mixture models, for example, often introduce hierarchical or shrinkage priors to regularize component parameters or to encourage a smaller
number of occupied components \citep{RichardsonGreen1997,MalsinerWalliEtAl2016}. Such constructions are well suited to mixture inference but impose
additional structure on component weights or scales, while equal-weight mixtures eliminate variation in the relative importance of the components. The present construction instead uses direct randomization to allow weight imbalance, component overlap, overall scale, and within-task scale heterogeneity to vary across tasks.

Because overlapping components can produce fewer modes than the nominal component count $K$, a Monte Carlo diagnostic is conducted using $5000$ independently generated densities for each value of $K$. Table~\ref{tab:gmm_mode_distribution} reports the empirical distribution of the detected number of modes. The realized modal complexity changes only moderately with $K$.
Across all three generators, two or three modes are most common, and most densities contain no more than four modes. Even for $K=32$, only $6.0\%$ of densities have six or more modes. Thus, increasing $K$ adds available mixture components without producing a proportional increase in modal complexity.

\begingroup

\fontsize{10.5pt}{12pt}\selectfont

\setlength{\tabcolsep}{0pt}
\renewcommand{\arraystretch}{1.15}

\setlength{\LTleft}{0pt}
\setlength{\LTright}{0pt}

\setlength{\LTpre}{4pt}
\setlength{\LTpost}{0pt}

\makeatletter
\setlength{\LTcapwidth}{\textwidth}
\def\LT@makecaption#1#2#3{%
  \LT@mcol\LT@cols l{%
    \parbox[t]{\LTcapwidth}{%
      \reset@font
      \fontsize{9.5pt}{11.5pt}\sffamily\selectfont
      #1{\textbf{#2.}\space}#3\strut\par
    }%
  }%
}
\makeatother

\begin{longtable}{
@{}
>{\raggedright\arraybackslash}p{0.31\textwidth}
>{\raggedleft\arraybackslash}p{0.0766\textwidth}
>{\raggedleft\arraybackslash}p{0.0766\textwidth}
>{\raggedleft\arraybackslash}p{0.0766\textwidth}
>{\raggedleft\arraybackslash}p{0.0766\textwidth}
>{\raggedleft\arraybackslash}p{0.0766\textwidth}
>{\raggedleft\arraybackslash}p{0.0766\textwidth}
>{\raggedleft\arraybackslash}p{0.0766\textwidth}
>{\raggedleft\arraybackslash}p{0.0766\textwidth}
>{\raggedleft\arraybackslash}p{0.0766\textwidth}
@{}
}

\caption[Detected mode counts under the bounded GMM generators]
{Empirical distributions of detected mode counts under the bounded GMM
generators. Entries are percentages from \(5000\) independently generated
densities for each \(K\).}
\label{tab:gmm_mode_distribution}
\\

\toprule
&
\multicolumn{9}{c}{Number of detected modes}
\\

\cmidrule(lr){2-10}

Number of components \(K\)
&
1
&
2
&
3
&
4
&
5
&
6
&
7
&
8
&
\(\geq 9\)
\\
\midrule
\endfirsthead

\multicolumn{10}{@{}l}{
\fontsize{9pt}{11pt}\selectfont
\sffamily\itshape
Table~\thetable\ continued
}
\\[2pt]

\toprule
&
\multicolumn{9}{c}{Number of detected modes}
\\

\cmidrule(lr){2-10}

Number of components \(K\)
&
1
&
2
&
3
&
4
&
5
&
6
&
7
&
8
&
\(\geq 9\)
\\
\midrule
\endhead

\endfoot

\bottomrule

\multicolumn{10}{@{}p{\textwidth}@{}}{%
\vspace{2pt}
\fontsize{10pt}{11.5pt}\selectfont
\textit{Note:}
Modes are detected numerically using a \(2\%\) relative-prominence
threshold; boundary modes are included.
\vspace{2pt}
}
\\
\endlastfoot

8
&
17.0\%
&
36.2\%
&
27.2\%
&
13.7\%
&
4.8\%
&
1.0\%
&
0.2\%
&
0.0\%
&
0.0\%
\\[2pt]

16
&
14.4\%
&
32.0\%
&
26.8\%
&
15.2\%
&
7.3\%
&
3.1\%
&
0.9\%
&
0.2\%
&
0.0\%
\\[2pt]

32
&
13.5\%
&
30.0\%
&
26.3\%
&
15.5\%
&
8.7\%
&
4.0\%
&
1.2\%
&
0.6\%
&
0.2\%
\\

\end{longtable}

\endgroup

Three amortized selectors are then trained separately using the
$K=8$, $K=16$, and $K=32$ GMM generators, with a fresh GMM generated independently for each training task. Sample sizes are drawn uniformly from $\{5,\ldots,256\}$, and the selector uses the same feature representation, bounded logarithmic score objective, network architecture, and model-selection procedure as in the multi-family experiment. Implementation details are provided in Appendix~\ref{app:network_training}. 

Evaluation follows the protocol of Section~\ref{sec:multifamily}.
The three GMM-trained selectors are tested on independently generated bounded $K=32$ GMM tasks and on the same bounded multi-family benchmark. The previously trained multi-family amortized selector is included in both comparisons.

\begin{figure*}[t] \centering \includegraphics[width=0.95\textwidth] {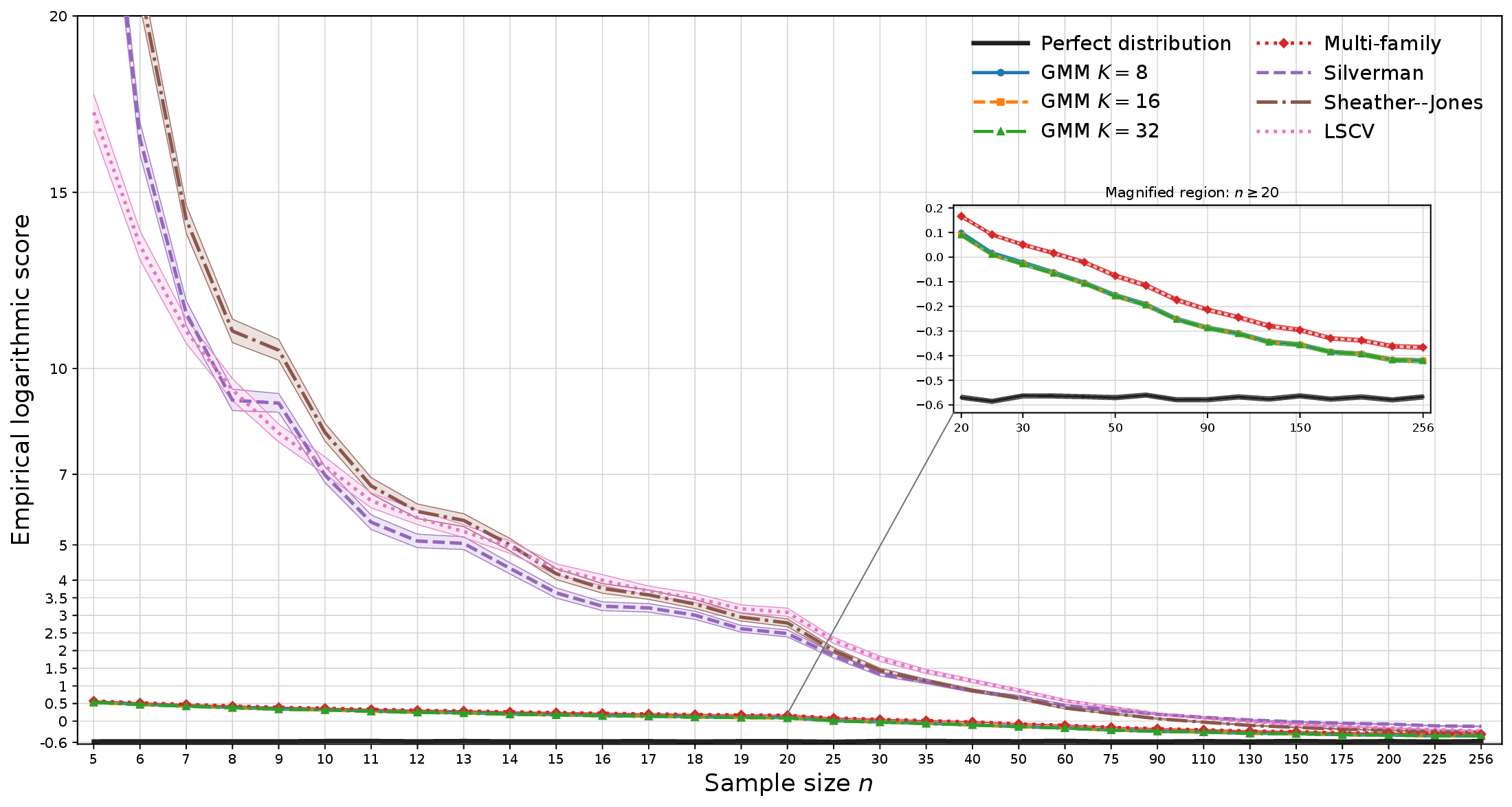} \caption{Empirical logarithmic score versus sample size under bounded $K=32$ GMM sampling on $[-1,1]$. Scores are averaged over $N_{\mathrm{test}}=30000$ independent GMM tasks at each sample size. GMM $K=8$, GMM $K=16$, and GMM $K=32$ denote amortized selectors trained using the respective GMM generators, while Multi-family denotes the selector trained on the bounded multi-family task distribution. Shaded bands show $90\%$ bootstrap intervals, and the black line denotes the corresponding score of the true target densities.} \label{fig:gmm_k32_comparison_empirical_score} 
\end{figure*}

Figure~\ref{fig:gmm_k32_comparison_empirical_score} shows a more
pronounced separation between the GMM-trained amortized selectors and the classical selectors than in the bounded multi-family experiment, particularly at small and moderate sample sizes. All three GMM-trained selectors remain substantially closer to the true-density reference throughout the displayed sample-size range, with the gap narrowing as $n$ increases. The multi-family-trained selector also performs well on the GMM test
distribution, although its empirical logarithmic score lies slightly above the GMM-trained selectors. This provides further evidence of cross-distribution generalization.

\begin{figure*}[t]
\centering
\includegraphics[width=0.95\textwidth]
{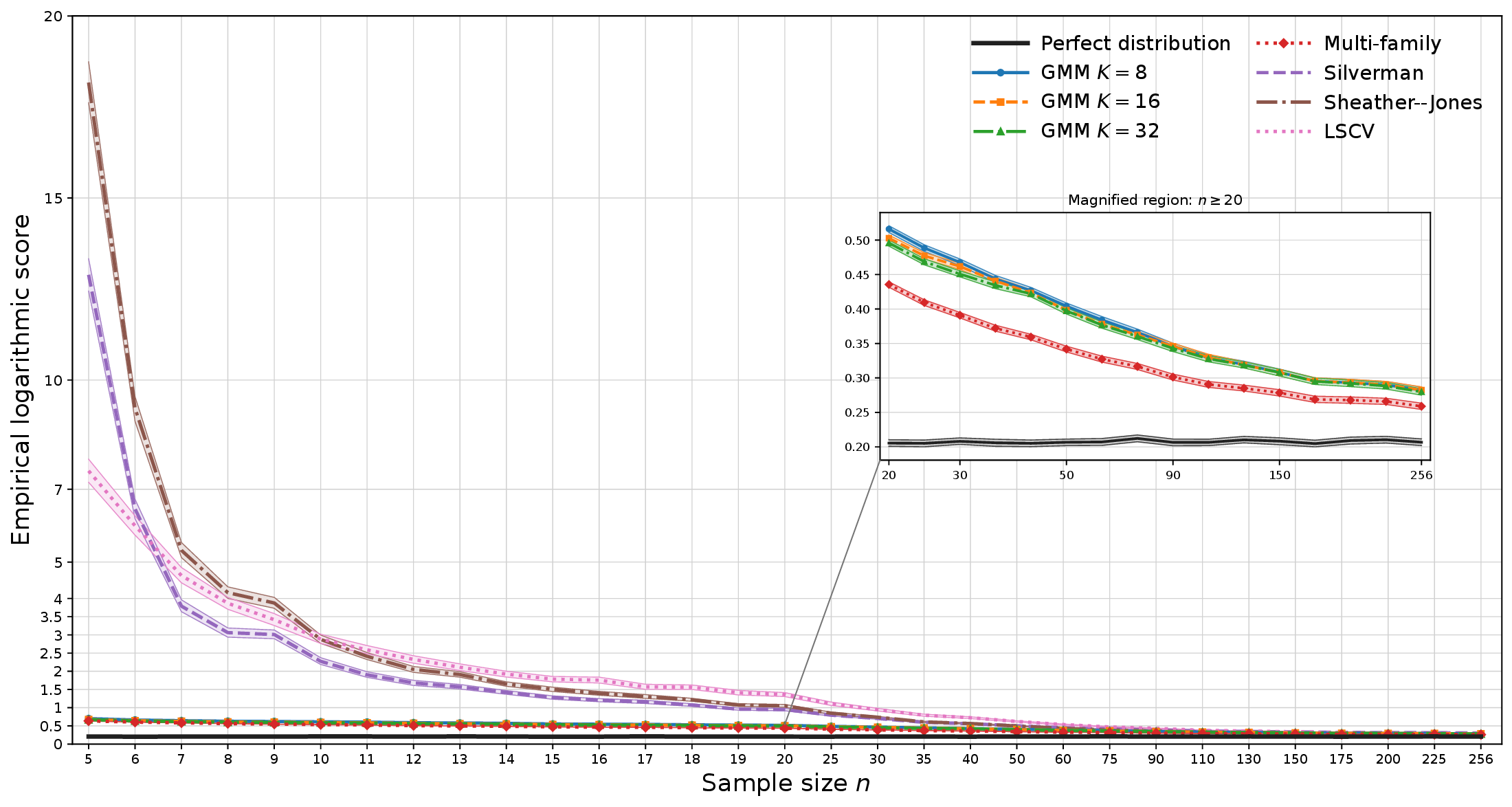}
\caption{Empirical logarithmic score versus sample size under bounded
multi-family sampling on $[-1,1]$. Scores are averaged equally across
the ten families using $N_{\mathrm{test}}=3000$ independent tasks per
family at each sample size. GMM $K=8$, $16$, and $32$ denote selectors
trained using the corresponding GMM generators, while Multi-family
denotes the multi-family-trained selector. Shaded bands show $90\%$
bootstrap intervals, and the black line denotes the corresponding score
of the true target densities.}

\label{fig:gmm_multifamily_comparison_empirical_score}

\end{figure*}

Figure~\ref{fig:gmm_multifamily_comparison_empirical_score} shows that the selector trained directly on the multi-family task distribution attains the lowest empirical logarithmic score among the learned selectors on its matched test distribution. Nevertheless, all three GMM-trained selectors remain close to the multi-family-trained selector and substantially outperform the classical bandwidth selectors, particularly at small and moderate sample sizes. These results show that the GMM-trained selectors generalize effectively to the multi-family
benchmark, even though the ten distribution families are not explicitly included in their training task distributions.

Across both the GMM and multi-family test distributions, the three GMM-trained selectors produce very similar empirical logarithmic scores, indicating that performance is not strongly sensitive to the component count over the range considered. The mode-count diagnostic in Table~\ref{tab:gmm_mode_distribution} provides complementary evidence
that the realized modal complexity changes only moderately as $K$
increases.

Together, the $L^1$ approximation property and the empirical
cross-distribution results support randomized finite GMMs as a flexible training distribution for amortized bandwidth learning.

\FloatBarrier

\section{Summary and Discussion}

This paper develops an amortized approach to bandwidth selection for
Gaussian-kernel density estimation. Classical bandwidth selectors obtain
the bandwidth through analytically or asymptotically prescribed rules or
through repeated sample-specific optimization, as represented here by
Silverman's rule, the Sheather--Jones plug-in selector, and least-squares
cross-validation. In contrast, the proposed framework learns the
sample-to-bandwidth mapping itself across a distribution of
density-estimation tasks. The density estimator remains classical KDE;
only the bandwidth-selection rule is learned. Training is carried out
directly under the logarithmic score, aligning bandwidth selection with
the probabilistic criterion used to evaluate the resulting density
estimate.

A key part of the framework is the bounded-support formulation. Without
restricting training to a common region of interest, rare realizations in
remote low-density regions can dominate the logarithmic score and drive
the learned bandwidth towards excessive smoothing, making it difficult to
train a single selector effectively across heterogeneous tasks. The
truncated-and-renormalized formulation addresses this problem by placing
the target density and the KDE on a common bounded interval and
concentrating the learning objective on the region of practical interest.
This restriction is not particularly limiting in practice. The bounded
interval is treated as a working domain rather than as an estimate of the
full mathematical support. Physical or operational bounds provide a natural
choice when available; otherwise, the interval may be specified from
representative historical or reference data with suitable margins, or
constructed from a data-driven expansion of the observed sample range.
Proposition~1 further shows that affine standardization changes the bounded
logarithmic score only by an additive constant and rescales the corresponding
minimizing bandwidth, allowing a selector trained on the reference interval
to be transferred by standardization and bandwidth rescaling.

The empirical results show a consistent advantage of amortized bandwidth
selection over the classical selectors considered here. Under Gaussian
sampling, a setting particularly favourable to classical bandwidth
selection and especially to Silverman's normal-reference rule, the
amortized selector nevertheless attains the lowest empirical logarithmic
score throughout the examined sample-size range, with the largest gains
at small sample sizes. The multi-family experiment further strengthens this result. A single
amortized selector maintains strong performance across widely varying
density structures and shows an even larger advantage over the classical
selectors.

The GMM experiment further shows that a manually enumerated collection
of distribution families is not required to construct an effective training
task distribution. Finite Gaussian mixtures provide a flexible
density-approximation framework, supported by their $L^1$ approximation
property, and allow a broad range of density structures to be generated
within a single generative framework. The GMM-trained selectors show little
sensitivity to the component count over the range considered and substantially
outperform the classical selectors on independently generated GMM tasks.
More importantly, they remain close to the matched multi-family-trained
selector on the same benchmark, even though those distribution families are
not explicitly included in the GMM training distribution. Conversely, the
multi-family-trained selector also performs well on the GMM benchmark.
Once trained, the same amortized selector can be applied directly to finite
samples from unknown densities without specifying, identifying, or fitting
their distributional families. The cross-distribution experiments show that
this broad applicability is accompanied by strong empirical performance
across diverse density structures.

The significance of amortization is therefore not merely computational.
Once trained, the selector requires only a single forward evaluation for
each new sample, but the more fundamental change is statistical: information accumulated across a distribution of
density-estimation tasks is used to learn the bandwidth-selection
mapping itself. This
allows recurring relationships between observable sample characteristics
and appropriate smoothing to be exploited rather than rediscovered from
each finite sample in isolation. The especially large gains in small and
heterogeneous samples are consistent with this interpretation, since
these are precisely the settings in which an individual sample contains
limited information about the underlying density structure. An interactive implementation of the amortized bandwidth selector is
available through the
\href{https://amortized-kde.streamlit.app/}
{\textcolor{blue}{\underline{online application}}},
where the trained selector can be applied directly to user-supplied
one-dimensional samples.

The present study is restricted to one-dimensional Gaussian-kernel KDE
with a single global bandwidth and a small collection of interpretable
sample features. These choices make it possible to isolate the effect of
learning the bandwidth rule itself, but they also leave substantial scope
for extension. Richer feature representations could capture density
structure not represented by low-order moments, while alternative kernels,
locally adaptive bandwidths, and multivariate density estimation provide
natural extensions of the framework. The $L^1$ approximation property of
finite Gaussian mixtures further provides a principled basis for extending
the training distribution over increasingly broad classes of
density-estimation tasks without explicitly enumerating individual
distribution families. Of particular practical interest are finite
ensemble outputs, for which the underlying distributional shape and the
appropriate amount of smoothing can vary from one forecast instance to
another. In such settings, a reusable selector that adapts its bandwidth
to each observed ensemble while learning from a broad population of
density-estimation tasks provides a natural alternative to bandwidth rules
whose functional form is fixed in advance.

\section{Acknowledgments}
This work was supported by the National Natural Science Foundation of China (NSFC; Grant No.~42450196).

\clearpage

\begin{appendices}

\section*{Appendix}

\numberwithin{equation}{section}

\section{Classical Bandwidth Selectors}
\label{app:classical_selectors}

This appendix describes the three classical bandwidth selectors used
throughout the experimental comparisons: Silverman's rule, the
Sheather--Jones selector, and least-squares cross-validation. The same
implementations are used in all experiments. In the bounded-support
experiments, each selector is applied to the observed sample on its
original scale, and the resulting bandwidth is subsequently used in
the truncated-and-renormalized KDE defined in
Section~\ref{sec:bounded_support}. The corresponding location and scale
transformation properties are established separately in
Appendix~\ref{app:scale_equivariance}.

\subsection{Silverman's Rule}

Silverman's rule is a normal-reference bandwidth selector that combines
the usual sample standard deviation with a robust scale estimate based
on the interquartile range \citep{Silverman1986}. For an observed sample
\begin{equation}
    S=(x_1,\ldots,x_n),
\end{equation}
the bandwidth used in the experiments is
\begin{equation}
    h_{\mathrm{S}}(S)
    =
    0.9
    \min
    \left\{
        \operatorname{sd}(S),
        \frac{\operatorname{IQR}(S)}{1.34}
    \right\}
    n^{-1/5},
    \label{eq:appendix_silverman}
\end{equation}
where
\begin{equation}
    \operatorname{sd}(S)
    =
    \left[
        \frac{1}{n-1}
        \sum_{i=1}^{n}
        (x_i-\bar{x})^2
    \right]^{1/2}
\end{equation}
and
\begin{equation}
    \operatorname{IQR}(S)
    =
    Q_{0.75}(S)-Q_{0.25}(S).
\end{equation}
The minimum of the two scale estimates limits the influence of extreme
observations while retaining the familiar \(n^{-1/5}\) dependence of
the normal-reference bandwidth. The method is available in closed form
and requires no sample-specific numerical optimization.

\subsection{Sheather--Jones Selector}

The Sheather--Jones method is a data-based plug-in selector derived
from the asymptotic mean integrated squared error of the KDE
\citep{SheatherJones1991}. For a kernel \(\kappa\), define
\begin{equation}
    R(\kappa)
    =
    \int_{\mathbb{R}}\kappa(u)^2\,du,
    \qquad
    \mu_2(\kappa)
    =
    \int_{\mathbb{R}}u^2\kappa(u)\,du,
\end{equation}
and let
\begin{equation}
    R(f'')
    =
    \int_{\mathbb{R}}
    \{f''(x)\}^2\,dx
\end{equation}
denote the roughness of the unknown density. The leading asymptotic
criterion has the form
\begin{equation}
    \operatorname{AMISE}(h)
    =
    \frac{R(\kappa)}{nh}
    +
    \frac{h^4}{4}
    \mu_2(\kappa)^2
    R(f'').
\end{equation}
If \(R(f'')\) were known, minimizing this expression would give
\begin{equation}
    h_{\operatorname{AMISE}}
    =
    \left[
        \frac{R(\kappa)}
        {n\mu_2(\kappa)^2R(f'')}
    \right]^{1/5}.
    \label{eq:appendix_amise_bandwidth}
\end{equation}
For the standard Gaussian kernel,
\begin{equation}
    R(\kappa)=\frac{1}{2\sqrt{\pi}},
    \qquad
    \mu_2(\kappa)=1.
\end{equation}

The quantity \(R(f'')\) is unknown and must be estimated from the
sample. The Sheather--Jones construction uses pilot estimates of
density-derivative functionals and determines the final bandwidth
through a multistage plug-in procedure. In its solve-the-equation form,
the defining relation can be represented schematically as
\begin{equation}
    h
    =
    \left[
        \frac{R(\kappa)}
        {
            n\mu_2(\kappa)^2
            \widehat{R}_{p(h)}(f'')
        }
    \right]^{1/5},
    \qquad
    h>0,
    \label{eq:appendix_sj_equation}
\end{equation}
where \(p(h)\) is a pilot bandwidth and
\(\widehat{R}_{p(h)}(f'')\) is the associated estimate of the
roughness functional. Because the pilot estimate depends on the final
bandwidth through \(p(h)\), equation
\eqref{eq:appendix_sj_equation} must be solved numerically.

In the experiments, the Sheather--Jones bandwidth is computed using
\begin{equation}
    \texttt{bw.SJ(S, method = "ste")}
\end{equation}
in \textsf{R}, where \texttt{"ste"} denotes the solve-the-equation
version. Its selected positive solution is denoted by
\begin{equation}
    h_{\mathrm{SJ}}(S).
\end{equation}
The implementation uses pilot estimation of density derivatives and
numerical root finding. As with any numerical implementation, the
reported value may be affected by floating-point arithmetic,
discretization, and root-finding tolerance.

\subsection{Least-Squares Cross-Validation}

Least-squares cross-validation selects the bandwidth by minimizing a
sample-based estimate of integrated squared error
\citep{Rudemo1982,Bowman1984}. For a candidate bandwidth \(h>0\), let
\begin{equation}
    q_h(x;S)
    =
    \frac{1}{nh}
    \sum_{i=1}^{n}
    \phi
    \left(
        \frac{x-x_i}{h}
    \right)
\end{equation}
be the Gaussian KDE, and define the leave-one-out estimator
\begin{equation}
    q_{h,-i}(x;S)
    =
    \frac{1}{(n-1)h}
    \sum_{\substack{j=1\\j\ne i}}^{n}
    \phi
    \left(
        \frac{x-x_j}{h}
    \right).
\end{equation}
The least-squares cross-validation criterion is
\begin{equation}
    \operatorname{CV}(h;S)
    =
    \int_{\mathbb{R}}
    q_h(x;S)^2\,dx
    -
    \frac{2}{n}
    \sum_{i=1}^{n}
    q_{h,-i}(x_i;S).
    \label{eq:appendix_lscv}
\end{equation}
The term \(\int f(x)^2\,dx\) in the integrated squared error does not
depend on \(h\) and is therefore omitted from the criterion.

For the Gaussian kernel, both terms in
\eqref{eq:appendix_lscv} can be evaluated directly from pairwise sample
differences. Let
\begin{equation}
    \phi_{\sqrt{2}}(u)
    =
    \frac{1}{\sqrt{4\pi}}
    \exp\left(-\frac{u^2}{4}\right)
\end{equation}
denote the density of a centered Gaussian variable with variance \(2\).
Then
\begin{equation}
\begin{aligned}
    \operatorname{CV}(h;S)
    &=
    \frac{1}{n^2h}
    \sum_{i=1}^{n}
    \sum_{j=1}^{n}
    \phi_{\sqrt{2}}
    \left(
        \frac{x_i-x_j}{h}
    \right)
    \\
    &\quad
    -
    \frac{2}{n(n-1)h}
    \sum_{i=1}^{n}
    \sum_{\substack{j=1\\j\ne i}}^{n}
    \phi
    \left(
        \frac{x_i-x_j}{h}
    \right).
\end{aligned}
\label{eq:appendix_lscv_gaussian}
\end{equation}

Rather than minimizing the criterion over all \(h>0\), the experimental
implementation evaluates it over \(60\) candidate bandwidths
logarithmically spaced relative to Silverman's bandwidth. The
candidate grid is
\begin{equation}
    \mathcal{G}(S)
    =
    \left\{
        h_0(S),\ldots,h_{59}(S)
    \right\},
\end{equation}
where
\begin{equation}
    h_k(S)
    =
    0.2h_{\mathrm{S}}(S)
    \left(
        \frac{5}{0.2}
    \right)^{k/59},
    \qquad
    k=0,\ldots,59.
    \label{eq:appendix_lscv_grid}
\end{equation}
Thus,
\begin{equation}
    h_0(S)=0.2h_{\mathrm{S}}(S),
    \qquad
    h_{59}(S)=5h_{\mathrm{S}}(S),
\end{equation}
and adjacent grid points satisfy
\begin{equation}
    \frac{h_{k+1}(S)}{h_k(S)}
    =
    \left(
        \frac{5}{0.2}
    \right)^{1/59}.
\end{equation}
The grid therefore has constant multiplicative spacing rather than
constant additive spacing, providing comparable relative resolution
across small and large bandwidths.

The selected LSCV bandwidth is
\begin{equation}
    h_{\mathrm{LSCV}}(S)
    \in
    \operatorname*{arg\,min}_{h\in\mathcal{G}(S)}
    \operatorname{CV}(h;S).
    \label{eq:appendix_lscv_selector}
\end{equation}
Defining the search interval relative to \(h_{\mathrm{S}}(S)\) also
ensures that the complete candidate grid rescales with the observations,
which is used in the scale-equivariance argument in
Appendix~\ref{app:scale_equivariance}.

\subsection{Use in the Bounded-Support Experiments}

The definitions above determine the bandwidths before bounded-support
evaluation. For a sample observed on \([A,B]\), each classical selector
is applied directly to that sample on its original scale, producing
\begin{equation}
    h_b(S),
    \qquad
    b\in
    \{
        \mathrm{S},
        \mathrm{SJ},
        \mathrm{LSCV}
    \}.
\end{equation}
The resulting bandwidth is then inserted into the
truncated-and-renormalized estimator
\begin{equation}
    q_{[A,B],h_b(S)}(x;S)
    =
    \frac{
        q_{h_b(S)}(x;S)\mathbf{1}_{[A,B]}(x)
    }{
        \displaystyle
        \int_A^B q_{h_b(S)}(u;S)\,du
    }.
\end{equation}
Accordingly, all methods are evaluated using the same bounded-support
density representation and the same logarithmic score criterion. The
bounded-support transformation changes the density used for evaluation,
but it does not alter the bandwidth-selection algorithms specified
above.

\section{Scale Equivariance of the Gaussian-Comparison Bandwidth
Selectors}
\label{app:scale_equivariance}

This appendix establishes the location and scale transformation
properties used in the Gaussian comparison. Let
\begin{equation}
    S=(x_1,\ldots,x_n)
\end{equation}
be an observed sample, and define its affine transformation by
\begin{equation}
    S_{\delta,c}
    =
    \delta+cS
    =
    (\delta+cx_1,\ldots,\delta+cx_n),
    \qquad
    \delta\in\mathbb{R},
    \quad
    c>0.
\end{equation}
A bandwidth selector $h_b$ is translation invariant and scale
equivariant if
\begin{equation}
    h_b(S_{\delta,c})
    =
    c\,h_b(S).
\end{equation}
Thus, the bandwidth itself is not scale invariant: it changes in direct
proportion to the scale of the data. Dimensionless quantities such as
$h_b(S)/\operatorname{sd}(S)$ are scale invariant.

The derivations below concern the exact defining constructions of the
selectors. Numerical implementations may deviate from these exact
identities because of floating-point arithmetic, numerical root finding,
binning, or the treatment of tied minima.

\subsection{Affine Transformation of the Gaussian KDE and
Logarithmic Score}

For a bandwidth $h>0$, the Gaussian KDE based on $S$ is
\begin{equation}
    q_h(x;S)
    =
    \frac{1}{nh}
    \sum_{i=1}^n
    \phi
    \left(
        \frac{x-x_i}{h}
    \right),
\end{equation}
where $\phi$ is the standard-normal density. Consider the transformed
sample $S_{\delta,c}$ and the transformed bandwidth $ch$. For any
$y\in\mathbb{R}$,
\begin{equation}
\begin{aligned}
    q_{ch}(\delta+cy;S_{\delta,c})
    &=
    \frac{1}{nch}
    \sum_{i=1}^n
    \phi
    \left(
        \frac{\delta+cy-(\delta+cx_i)}{ch}
    \right)
    \\
    &=
    \frac{1}{nch}
    \sum_{i=1}^n
    \phi
    \left(
        \frac{y-x_i}{h}
    \right)
    \\
    &=
    \frac{1}{c}q_h(y;S).
\end{aligned}
\end{equation}
Hence,
\begin{equation}
    q_{ch}(\delta+cy;S_{\delta,c})
    =
    c^{-1}q_h(y;S).
    \label{eq:scale-equivariance-1}
\end{equation}

Let $f$ be a density on $\mathbb{R}$ and let
\begin{equation}
    f_{\delta,c}(x)
    =
    \frac{1}{c}
    f
    \left(
        \frac{x-\delta}{c}
    \right)
\end{equation}
be the density obtained by applying the transformation
$x\mapsto \delta+cx$. The logarithmic score criterion for the transformed
problem is
\begin{equation}
    \mathcal{L}
    \bigl(
        ch;S_{\delta,c},f_{\delta,c}
    \bigr)
    =
    -\int_{\mathbb{R}}
    \log_2 q_{ch}(x;S_{\delta,c})
    f_{\delta,c}(x)\,dx.
\end{equation}
Using the change of variables $x=\delta+cy$, so that $dx=c\,dy$, gives
\begin{equation}
\begin{aligned}
    \mathcal{L}
    \bigl(
        ch;S_{\delta,c},f_{\delta,c}
    \bigr)
    &=
    -\int_{\mathbb{R}}
    \log_2
    q_{ch}(\delta+cy;S_{\delta,c})
    f(y)\,dy
    \\
    &=
    -\int_{\mathbb{R}}
    \log_2
    \left[
        \frac{1}{c}q_h(y;S)
    \right]
    f(y)\,dy
    \\
    &=
    -\int_{\mathbb{R}}
    \log_2 q_h(y;S)f(y)\,dy
    +
    \log_2 c
    \\
    &=
    \mathcal{L}(h;S,f)+\log_2 c.
\end{aligned}
\end{equation}
Therefore,
\begin{equation}
    \boxed{
    \mathcal{L}
    \bigl(
        ch;S_{\delta,c},f_{\delta,c}
    \bigr)
    =
    \mathcal{L}(h;S,f)+\log_2 c.
    }
    \label{eq:scale-equivariance-2}
\end{equation}

The additive term $\log_2 c$ does not depend on the bandwidth.
Consequently, the task-specific minimizing bandwidths transform as
\begin{equation}
    \operatorname*{arg\,min}_{h'>0}
    \mathcal{L}
    \bigl(
        h';S_{\delta,c},f_{\delta,c}
    \bigr)
    =
    c
    \operatorname*{arg\,min}_{h>0}
    \mathcal{L}(h;S,f).
    \label{eq:scale-equivariance-3}
\end{equation}

Moreover, if the two bandwidth selectors $h_{b_1}$ and $h_{b_2}$ are
scale equivariant, their logarithmic score difference is invariant under
the same affine transformation:
\begin{equation}
\begin{aligned}
    &
    \mathcal{L}
    \bigl(
        h_{b_1}(S_{\delta,c});
        S_{\delta,c},
        f_{\delta,c}
    \bigr)
    -
    \mathcal{L}
    \bigl(
        h_{b_2}(S_{\delta,c});
        S_{\delta,c},
        f_{\delta,c}
    \bigr)
    \\
    &\qquad=
    \mathcal{L}
    \bigl(
        h_{b_1}(S);S,f
    \bigr)
    -
    \mathcal{L}
    \bigl(
        h_{b_2}(S);S,f
    \bigr).
\end{aligned}
\label{eq:scale-equivariance-4}
\end{equation}

\subsection{Gaussian Amortized Selector}

In the Gaussian comparison, the amortized selector is parameterized as
\begin{equation}
    h_\theta(S)
    =
    \operatorname{sd}(S)F_\theta(n),
\end{equation}
where $F_\theta(n)>0$ is dimensionless and depends only on the sample
size.

The sample mean of $S_{\delta,c}$ is
\begin{equation}
    \overline{x}_{\delta,c}
    =
    \delta+c\overline{x}.
\end{equation}
Therefore,
\begin{equation}
\begin{aligned}
    \operatorname{sd}(S_{\delta,c})
    &=
    \left[
        \frac{1}{n-1}
        \sum_{i=1}^n
        \left\{
            (\delta+cx_i)-(\delta+c\overline{x})
        \right\}^2
    \right]^{1/2}
    \\
    &=
    \left[
        \frac{c^2}{n-1}
        \sum_{i=1}^n
        (x_i-\overline{x})^2
    \right]^{1/2}
    \\
    &=
    c\,\operatorname{sd}(S).
\end{aligned}
\end{equation}
Since the sample size is unchanged,
\begin{equation}
\begin{aligned}
    h_\theta(S_{\delta,c})
    &=
    \operatorname{sd}(S_{\delta,c})F_\theta(n)
    \\
    &=
    c\,\operatorname{sd}(S)F_\theta(n)
    \\
    &=
    c\,h_\theta(S).
\end{aligned}
\end{equation}
Hence,
\begin{equation}
    \boxed{
        h_\theta(\delta+cS)
        =
        c\,h_\theta(S).
    }
    \label{eq:scale-equivariance-5}
\end{equation}

\subsection{Silverman's Rule}

Silverman's bandwidth is
\begin{equation}
    h_{\mathrm{S}}(S)
    =
    0.9
    \min
    \left\{
        \operatorname{sd}(S),
        \frac{\operatorname{IQR}(S)}{1.34}
    \right\}
    n^{-1/5}.
\end{equation}

As shown above,
\begin{equation}
    \operatorname{sd}(S_{\delta,c})
    =
    c\,\operatorname{sd}(S).
    \label{eq:scale-equivariance-6}
\end{equation}
For $c>0$, empirical quantiles transform according to
\begin{equation}
    Q_p(S_{\delta,c})
    =
    \delta+cQ_p(S).
\end{equation}
It follows that
\begin{equation}
\begin{aligned}
    \operatorname{IQR}(S_{\delta,c})
    &=
    Q_{0.75}(S_{\delta,c})-Q_{0.25}(S_{\delta,c})
    \\
    &=
    \bigl[\delta+cQ_{0.75}(S)\bigr]
    -
    \bigl[\delta+cQ_{0.25}(S)\bigr]
    \\
    &=
    c\,\operatorname{IQR}(S).
\end{aligned}
\label{eq:scale-equivariance-7}
\end{equation}

Substitution into Silverman's formula gives
\begin{equation}
\begin{aligned}
    h_{\mathrm{S}}(S_{\delta,c})
    &=
    0.9
    \min
    \left\{
        c\,\operatorname{sd}(S),
        \frac{c\,\operatorname{IQR}(S)}{1.34}
    \right\}
    n^{-1/5}
    \\
    &=
    c\,
    0.9
    \min
    \left\{
        \operatorname{sd}(S),
        \frac{\operatorname{IQR}(S)}{1.34}
    \right\}
    n^{-1/5}
    \\
    &=
    c\,h_{\mathrm{S}}(S).
\end{aligned}
\end{equation}
Therefore,
\begin{equation}
    \boxed{
        h_{\mathrm{S}}(\delta+cS)
        =
        c\,h_{\mathrm{S}}(S).
    }
    \label{eq:scale-equivariance-8}
\end{equation}

\subsection{Sheather--Jones Selector}

The scale equivariance of the Sheather--Jones selector follows from
the homogeneity of its pilot derivative-functional estimates. The
solve-the-equation construction can be expressed in terms of estimates
of the form
\begin{equation}
    \widehat{\psi}_r(p;S)
    =
    \frac{1}{n^2p^{r+1}}
    \sum_{i=1}^n
    \sum_{j=1}^n
    L^{(r)}
    \left(
        \frac{x_i-x_j}{p}
    \right),
    \label{eq:scale-equivariance-9}
\end{equation}
where $L^{(r)}$ is the $r$th derivative of a pilot kernel and $p>0$
is a pilot bandwidth. Alternative normalizing conventions, such as
using $n(n-1)$ and excluding diagonal terms, do not affect the scaling
argument.

Because pairwise differences are unaffected by translation and scale
linearly with $c$,
\begin{equation}
    \frac{(\delta+cx_i)-(\delta+cx_j)}{cp}
    =
    \frac{x_i-x_j}{p}.
\end{equation}
Consequently,
\begin{equation}
\begin{aligned}
    \widehat{\psi}_r(cp;S_{\delta,c})
    &=
    \frac{1}{n^2(cp)^{r+1}}
    \sum_{i=1}^n
    \sum_{j=1}^n
    L^{(r)}
    \left(
        \frac{x_i-x_j}{p}
    \right)
    \\
    &=
    c^{-(r+1)}
    \widehat{\psi}_r(p;S).
\end{aligned}
\end{equation}
Thus,
\begin{equation}
    \boxed{
        \widehat{\psi}_r(cp;S_{\delta,c})
        =
        c^{-(r+1)}
        \widehat{\psi}_r(p;S).
    }
    \label{eq:scale-equivariance-10}
\end{equation}
In particular,
\begin{equation}
    \widehat{\psi}_4(cp;S_{\delta,c})
    =
    c^{-5}\widehat{\psi}_4(p;S),
    \label{eq:scale-equivariance-11}
\end{equation}
and
\begin{equation}
    \widehat{\psi}_6(cp;S_{\delta,c})
    =
    c^{-7}\widehat{\psi}_6(p;S).
    \label{eq:scale-equivariance-12}
\end{equation}

The solve-the-equation construction begins with a scale estimate
$s(S)$ satisfying
\begin{equation}
    s(S_{\delta,c})=c\,s(S).
    \label{eq:scale-equivariance-13}
\end{equation}
For example, a scale estimate based on the minimum of the sample
standard deviation and a rescaled interquartile range has this
property.

Let the initial pilot bandwidths be
\begin{equation}
    p_4(S)
    =
    C_4\,s(S)n^{-1/7},
    \qquad
    p_6(S)
    =
    C_6\,s(S)n^{-1/9},
    \label{eq:scale-equivariance-14}
\end{equation}
where $C_4$ and $C_6$ are dimensionless constants. Equation
\eqref{eq:scale-equivariance-13} implies
\begin{equation}
    p_4(S_{\delta,c})
    =
    c\,p_4(S),
    \qquad
    p_6(S_{\delta,c})
    =
    c\,p_6(S).
    \label{eq:scale-equivariance-15}
\end{equation}

Define the third-derivative roughness pilot estimate by
\begin{equation}
    \widehat{R}_3(S)
    =
    -\widehat{\psi}_6
    \bigl(
        p_6(S);S
    \bigr).
    \label{eq:scale-equivariance-16}
\end{equation}
Using \eqref{eq:scale-equivariance-12} and
\eqref{eq:scale-equivariance-15},
\begin{equation}
    \widehat{R}_3(S_{\delta,c})
    =
    c^{-7}\widehat{R}_3(S).
    \label{eq:scale-equivariance-17}
\end{equation}

The solve-the-equation pilot factor has the form
\begin{equation}
    \eta_{\mathrm{SJ}}(S)
    =
    C_\eta
    \left[
        \frac{
            \widehat{\psi}_4
            \bigl(
                p_4(S);S
            \bigr)
        }{
            \widehat{R}_3(S)
        }
    \right]^{1/7},
    \label{eq:scale-equivariance-18}
\end{equation}
where $C_\eta$ is dimensionless. From
\eqref{eq:scale-equivariance-11},
\eqref{eq:scale-equivariance-15}, and
\eqref{eq:scale-equivariance-17},
\begin{equation}
\begin{aligned}
    \eta_{\mathrm{SJ}}(S_{\delta,c})
    &=
    C_\eta
    \left[
        \frac{
            c^{-5}
            \widehat{\psi}_4
            \bigl(
                p_4(S);S
            \bigr)
        }{
            c^{-7}\widehat{R}_3(S)
        }
    \right]^{1/7}
    \\
    &=
    c^{2/7}\eta_{\mathrm{SJ}}(S).
\end{aligned}
\label{eq:scale-equivariance-19}
\end{equation}

For a candidate final bandwidth $h$, define the corresponding pilot
bandwidth by
\begin{equation}
    p_S(h)
    =
    \eta_{\mathrm{SJ}}(S)h^{5/7}.
    \label{eq:scale-equivariance-20}
\end{equation}
If $h$ is replaced by $ch$ and $S$ by $S_{\delta,c}$, then
\begin{equation}
\begin{aligned}
    p_{S_{\delta,c}}(ch)
    &=
    \eta_{\mathrm{SJ}}(S_{\delta,c})(ch)^{5/7}
    \\
    &=
    c^{2/7}\eta_{\mathrm{SJ}}(S)c^{5/7}h^{5/7}
    \\
    &=
    c\,p_S(h).
\end{aligned}
\label{eq:scale-equivariance-21}
\end{equation}

The Sheather--Jones bandwidth is defined as a positive solution of an
equation of the form
\begin{equation}
    h
    =
    \left[
        \frac{D_n}{
            \widehat{\psi}_4
            \bigl(
                p_S(h);S
            \bigr)
        }
    \right]^{1/5},
    \label{eq:scale-equivariance-22}
\end{equation}
where $D_n$ depends on the sample size and kernel constants but not on
the scale of the observations.

Suppose that $h$ satisfies \eqref{eq:scale-equivariance-22} for the
sample $S$. For the transformed sample, evaluate the right-hand side at
$ch$. From
\eqref{eq:scale-equivariance-10} and \eqref{eq:scale-equivariance-21},
\begin{equation}
\begin{aligned}
    &
    \left[
        \frac{D_n}{
            \widehat{\psi}_4
            \bigl(
                p_{S_{\delta,c}}(ch);
                S_{\delta,c}
            \bigr)
        }
    \right]^{1/5}
    \\
    &\qquad=
    \left[
        \frac{D_n}{
            \widehat{\psi}_4
            \bigl(
                c\,p_S(h);
                S_{\delta,c}
            \bigr)
        }
    \right]^{1/5}
    \\
    &\qquad=
    \left[
        \frac{D_n}{
            c^{-5}
            \widehat{\psi}_4
            \bigl(
                p_S(h);S
            \bigr)
        }
    \right]^{1/5}
    \\
    &\qquad=
    c
    \left[
        \frac{D_n}{
            \widehat{\psi}_4
            \bigl(
                p_S(h);S
            \bigr)
        }
    \right]^{1/5}
    \\
    &\qquad=
    ch.
\end{aligned}
\end{equation}
Thus, whenever $h$ solves the Sheather--Jones equation for $S$, the
bandwidth $ch$ solves the corresponding equation for $S_{\delta,c}$.
Consequently, the sets of positive solutions satisfy
\begin{equation}
    \mathcal{H}_{\mathrm{SJ}}(S_{\delta,c})
    =
    c\,\mathcal{H}_{\mathrm{SJ}}(S).
    \label{eq:scale-equivariance-23}
\end{equation}
When the defining equation has a unique positive solution, this reduces
to
\begin{equation}
    \boxed{
        h_{\mathrm{SJ}}(\delta+cS)
        =
        c\,h_{\mathrm{SJ}}(S).
    }
    \label{eq:scale-equivariance-24}
\end{equation}

\subsection{Least-Squares Cross-Validation on a Relative Grid}

For a bandwidth $h$, the least-squares cross-validation criterion is
\begin{equation}
    \operatorname{CV}(h;S)
    =
    \int_{\mathbb{R}}
    q_h(x;S)^2\,dx
    -
    \frac{2}{n}
    \sum_{i=1}^n
    q_{h,-i}(x_i;S),
    \label{eq:scale-equivariance-25}
\end{equation}
where
\begin{equation}
    q_{h,-i}(x;S)
    =
    \frac{1}{(n-1)h}
    \sum_{j\ne i}
    \phi
    \left(
        \frac{x-x_j}{h}
    \right)
\end{equation}
is the leave-one-out KDE.

The KDE transformation in \eqref{eq:scale-equivariance-1} gives
\begin{equation}
    q_{ch}(\delta+cy;S_{\delta,c})
    =
    c^{-1}q_h(y;S).
    \label{eq:scale-equivariance-26}
\end{equation}
The leave-one-out estimator satisfies the analogous relation
\begin{equation}
    q_{ch,-i}(\delta+cx_i;S_{\delta,c})
    =
    c^{-1}q_{h,-i}(x_i;S).
    \label{eq:scale-equivariance-27}
\end{equation}

For the integrated-square term, the change of variables
$x=\delta+cy$ gives
\begin{equation}
\begin{aligned}
    \int_{\mathbb{R}}
    q_{ch}(x;S_{\delta,c})^2\,dx
    &=
    \int_{\mathbb{R}}
    q_{ch}(\delta+cy;S_{\delta,c})^2c\,dy
    \\
    &=
    \int_{\mathbb{R}}
    c^{-2}q_h(y;S)^2c\,dy
    \\
    &=
    c^{-1}
    \int_{\mathbb{R}}
    q_h(y;S)^2\,dy.
\end{aligned}
\label{eq:scale-equivariance-28}
\end{equation}
The leave-one-out term satisfies
\begin{equation}
\begin{aligned}
    \frac{2}{n}
    \sum_{i=1}^n
    q_{ch,-i}(\delta+cx_i;S_{\delta,c})
    &=
    c^{-1}
    \frac{2}{n}
    \sum_{i=1}^n
    q_{h,-i}(x_i;S).
\end{aligned}
\label{eq:scale-equivariance-29}
\end{equation}
Combining \eqref{eq:scale-equivariance-28} and \eqref{eq:scale-equivariance-29} yields
\begin{equation}
    \boxed{
        \operatorname{CV}
        \bigl(
            ch;S_{\delta,c}
        \bigr)
        =
        c^{-1}
        \operatorname{CV}(h;S).
    }
    \label{eq:scale-equivariance-30}
\end{equation}

The multiplicative factor $c^{-1}$ is positive and independent of the
bandwidth. It therefore preserves the ordering of candidate
bandwidths.

In the present implementation, LSCV is evaluated over the grid
\begin{equation}
    h_k(S)
    =
    0.2h_{\mathrm{S}}(S)
    \left(
        \frac{5}{0.2}
    \right)^{k/59},
    \qquad
    k=0,\ldots,59.
    \label{eq:scale-equivariance-31}
\end{equation}
Because Silverman's bandwidth satisfies \eqref{eq:scale-equivariance-8},
\begin{equation}
\begin{aligned}
    h_k(S_{\delta,c})
    &=
    0.2h_{\mathrm{S}}(S_{\delta,c})
    \left(
        \frac{5}{0.2}
    \right)^{k/59}
    \\
    &=
    c\,h_k(S).
\end{aligned}
\label{eq:scale-equivariance-32}
\end{equation}
Thus, the candidate sets satisfy
\begin{equation}
    \mathcal{G}(S_{\delta,c})
    =
    c\,\mathcal{G}(S),
    \label{eq:scale-equivariance-33}
\end{equation}
where
\begin{equation}
    \mathcal{G}(S)
    =
    \{
        h_0(S),\ldots,h_{59}(S)
    \}.
\end{equation}

For every corresponding grid point,
\begin{equation}
    \operatorname{CV}
    \bigl(
        h_k(S_{\delta,c});S_{\delta,c}
    \bigr)
    =
    c^{-1}
    \operatorname{CV}
    \bigl(
        h_k(S);S
    \bigr).
    \label{eq:scale-equivariance-34}
\end{equation}
All $60$ criterion values are therefore multiplied by the same
positive constant, so the minimizing grid index is unchanged. It
follows that
\begin{equation}
    \boxed{
        h_{\mathrm{LSCV}}(\delta+cS)
        =
        c\,h_{\mathrm{LSCV}}(S).
    }
    \label{eq:scale-equivariance-35}
\end{equation}
If several grid points attain the same minimum, the corresponding
minimizer sets transform in the same way. A deterministic tie-breaking
rule based on the grid index also preserves the result.

\subsection{Consequence for the Gaussian Comparison}

Let
\begin{equation}
    S_0\sim\mathcal{N}(0,1)^n
\end{equation}
and define
\begin{equation}
    S_{\delta,c}=\delta+cS_0
    \sim
    \mathcal{N}(\delta,c^2)^n.
\end{equation}
For every selector considered in the Gaussian comparison,
\begin{equation}
    h_b(S_{\delta,c})
    =
    c\,h_b(S_0),
    \qquad
    b\in
    \{
        \theta,
        \mathrm{S},
        \mathrm{SJ},
        \mathrm{LSCV}
    \}.
\end{equation}
Equation \eqref{eq:scale-equivariance-2} then gives
\begin{equation}
    \mathcal{L}
    \bigl(
        h_b(S_{\delta,c});
        S_{\delta,c},
        f_{\delta,c}
    \bigr)
    =
    \mathcal{L}
    \bigl(
        h_b(S_0);
        S_0,
        \phi
    \bigr)
    +
    \log_2 c.
    \label{eq:scale-equivariance-36}
\end{equation}

The absolute logarithmic scores for
$\mathcal{N}(\delta,c^2)$ therefore differ from those for
$\mathcal{N}(0,1)$ by the common additive term $\log_2 c$. The
ordering of the selectors and all pairwise score differences are
unchanged. Standard-normal sampling is consequently sufficient for the
comparison of these bandwidth selectors throughout the Gaussian
location--scale family.

\section{Neural Network Architecture and Training Procedure}
\label{app:network_training}

\subsection{Gaussian Benchmark}

For the Gaussian benchmark, the amortized selector uses the scale-equivariant
parameterization
\[
    h_\theta(S)
    =
    \operatorname{sd}(S)\,F_\theta(n),
    \qquad
    F_\theta(n)>0,
\]
where $F_\theta(n)$ is predicted from the sample size alone. The predictor is a
fully connected multilayer perceptron with layer widths
\[
    1 \longrightarrow 4 \longrightarrow 4 \longrightarrow 1,
\]
with ReLU activations in the hidden layers and a softplus output transformation.

Training tasks are generated online. At each optimization step,
$n\sim\operatorname{Unif}\{5,\ldots,256\}$ is drawn, and a batch of 256
independent samples from $\mathcal{N}(0,1)^n$ is generated. The expected
logarithmic score for each task is evaluated using 256-point Gauss--Hermite
quadrature.

The network is optimized using Adam with learning rate $10^{-3}$ and gradient
clipping at $1$, for at most 10,000 optimization steps. The training seed is fixed
at 1234. Model selection is based on a fixed validation set over
\[
    \{5,10,15,20,30,50,75,100,150,200,256\},
\]
with 512 independent tasks at each sample size. Validation is performed every
500 steps. Early stopping is applied after 3000 steps and is triggered after three
consecutive validation evaluations without an improvement of at least
$10^{-4}$ bits. The checkpoint used in the Gaussian comparison was retained at
optimization step 3500.

\subsection{Bounded Multi-Family Benchmark}

For the bounded multi-family benchmark, the amortized selector uses the
five-dimensional feature representation in~(4). The predictor is a fully connected
network with layer widths
\[
    5 \longrightarrow 128 \longrightarrow 128 \longrightarrow 1,
\]
with ReLU hidden activations and a softplus output transformation to ensure a
positive bandwidth.

As described in Section~3.2 and Table~1, 10,000 parameter settings are generated
in advance for each of the ten distribution families. For each training task, a family
is sampled with equal probability, a parameter setting is drawn from the corresponding
family-specific pool, and a fresh sample is generated from the resulting bounded
distribution. The sample size is drawn independently for each task from
$n\sim\operatorname{Unif}\{5,\ldots,256\}$. Each training batch contains 256
tasks, and the expected bounded logarithmic score for each task is approximated
using 1024 independent Monte Carlo draws from the same target distribution.

Training uses Adam with learning rate $5\times10^{-4}$, gradient clipping at $1$,
and an exponential moving average of the network parameters with decay $0.999$.
Training is allowed to continue for at most 40,000 optimization steps, with training
seed 2026.

Model selection uses a fixed validation set containing 20 tasks for every integer
$n\in\{5,\ldots,256\}$, giving 5040 tasks in total. The validation set is generated
once with seed 777 and held fixed throughout training. Validation is performed
every 1000 steps. Early stopping is not applied before 12,000 steps and is triggered
after five consecutive validation evaluations without an improvement of at least
$5\times10^{-5}$ bits. Training reached the maximum of 40,000 optimization
steps, at which the best checkpoint was retained. The corresponding
exponential-moving-average weights are used for subsequent evaluation.

\subsection{GMM Benchmarks}

The $K=8$, $K=16$, and $K=32$ GMM-trained selectors use the same feature
representation, network architecture, bounded logarithmic score objective, and
training procedure as the multi-family selector.

The main difference is the generation of training tasks. Rather than sampling from
a fixed parameter pool, a fresh GMM is independently generated for every training
task using the corresponding generator defined in Section~3.3. The sample size is
drawn independently for each task from
$n\sim\operatorname{Unif}\{5,\ldots,256\}$, and both the observed
sample and an independent Monte Carlo evaluation sample of size 1024 are
generated from the same bounded GMM.

Separate networks are trained for $K=8$, $K=16$, and $K=32$. The optimizer,
batch size, exponential moving average, validation schedule, early-stopping rule,
and random seeds are the same as in the multi-family experiment. For each value of
$K$, model selection is based on an independently generated fixed validation set.
All three training runs reached the maximum of 40,000 optimization steps without
triggering early stopping. The best checkpoints were retained at steps 39,000,
39,000, and 40,000 for $K=8$, $K=16$, and $K=32$, respectively, and their
exponential-moving-average weights are used in the final comparisons.

\subsection{Evaluation and Bootstrap Procedure}

All bandwidth-comparison experiments are evaluated on the sample-size grid
\[
\begin{split}
    \{&
    5,6,7,8,9,10,11,12,13,14,15,16,17,18,19,20,\\
    &25,30,35,40,50,60,75,90,110,130,150,175,200,225,256
    \}.
\end{split}
\]

For the Gaussian benchmark, 30,000 independent test samples are generated at each
sample size. For the bounded multi-family benchmark, 3000 independent test tasks
are generated from each of the ten families at each sample size. For the bounded
GMM benchmark, 30,000 independently generated $K=32$ GMM tasks are used at
each sample size. The cross-distribution comparison on the multi-family benchmark
uses the same balanced design of 3000 tasks per family.

For the bounded bandwidth-comparison experiments, each task uses 1024
independent Monte Carlo evaluation draws. Within each experiment, all
bandwidth selectors are evaluated using the same task-specific samples and
evaluation draws.

Uncertainty bands are obtained using 1000 paired nonparametric bootstrap
replicates. At each sample size, 30,000 task indices are resampled with replacement,
and the same indices are used for all methods. The lower and upper limits are the
empirical 5th and 95th percentiles of the bootstrap means, giving 90\% bootstrap
intervals.

\end{appendices}

\bibliographystyle{abbrvnat}
\bibliography{reference}

\end{document}